\newtheorem{theorem}{Theorem}[section]
\newtheorem{lemma}[theorem]{Lemma}
\newtheorem{definition}{Definition}
\newtheorem{example}{Example}
\crefname{equation}{eq.}{eqs.}
\Crefname{equation}{Eq.}{Eqs.}
\newcommand{\jiarui}[1]{{\color{blue}[JG: #1]}}
\newcommand{\jonathan}[1]{{\color{orange}[JS: #1]}}
\newcommand{\todo}[1]{{\color{red}[TO DO: #1]}}
\newcommand{\cupdot}{\mathbin{\mathaccent\cdot\cup}}
\newcommand{\given}{{\,|\,}}
\DeclareMathOperator{\Pro}{\mathbb{P}}
\DeclareMathOperator{\E}{\mathbb{E}}
\DeclareMathOperator{\supp}{supp}
\newcommand{\xpub}{\text{\textnormal{pub}}}
\newcommand{\xsm}{\text{\textnormal{sem}}}
\newcommand{\xpv}{\text{\textnormal{prv}}}
\newcommand{\sig}{\phi}
\newcommand{\sigtyped}{\phi^{\text{T}}}
\newcommand{\sigpub}{\phi^{\xpub}}
\newcommand{\sigsm}{\phi^{\xsm}}
\newcommand{\sigpv}{\phi^{\xpv}}
\newcommand{\pubblockingprofiles}{B^\xpub}
\newcommand{\smblockingprofiles}{B^\xsm}
\newcommand{\pvblockingprofiles}{B^\xpv}
\newcommand{\Worlds}{{\Omega}}
\newcommand{\priors}{{{\mu}}}
\newcommand{\Types}{{{\mathcal{T}}}}
\newcommand{\utility}{{{u}}}
\newcommand{\Maxdeviators}{{{d}}}
\newcommand{\profiles}{{{P}}}
\newcommand{\Actions}{{{A}}}
\newcommand{\Actionvectors}{{\mathcal{A}}}
\newcommand{\actionvector}{{\mathbf{a}}}
\newcommand{\signalvector}{\mathbf{g}}
\newcommand{\Signalvectors}{\mathcal{G}}
\newcommand{\Representativeactionvectors}{{\bar{\mathcal{A}}}}
\newcommand{\Agents}{{N}}
\newcommand{\toprofile}{{\boldsymbol{\varphi}}}
\newcommand{\torepresentative}{{\boldsymbol{\varphi^*}}}
\newcommand{\Deviationprofiles}{{{D}}}
\newcommand{\Usefulpublicsignals}{{\boldsymbol{{S}}}}
\newcommand{\Deviationagents}{{\boldsymbol{\overrightarrow{D}}}}
\newcommand{\Blockingagents}{{\boldsymbol{\overrightarrow{B}}}}
\newcommand{\Usefulsemisignals}{{\boldsymbol{\overrightarrow{\mathcal{S}}}}}
\newcommand{\whatever}{{\boldsymbol{a^*}}}
\newcommand{\actionsandwhatever}{{\boldsymbol{A^*}}}
\newcommand{\lotterypolicy}{\lambda}
\newcommand{\Permutations}{{\mathcal{M}}}
\newcommand{\tosignal}{{\boldsymbol{g}}}
\title{Bayesian Persuasion with Externalities: Exploiting Agent Types}
\author {
    Jonathan Shaki\textsuperscript{\rm 1},
    Jiarui Gan\textsuperscript{\rm 2},
    Sarit Kraus\textsuperscript{\rm 1}
}
\begin{document}

\maketitle

\begin{abstract}
We study a Bayesian persuasion problem with externalities. In this model, a principal sends signals to inform multiple agents about the state of the world. Simultaneously, due to the existence of externalities in the agents' utilities, the principal also acts as a correlation device to correlate the agents' actions. We consider the setting where the agents are categorized into a small number of types. Agents of the same type share identical utility functions and are treated equitably in the utility functions of both other agents and the principal. We study the problem of computing optimal signaling strategies for the principal, under three different types of signaling channels: public, private, and semi-private.  Our results include revelation-principle-style characterizations of optimal signaling strategies, linear programming formulations, and analysis of in/tractability of the optimization problems. It is demonstrated that when the maximum number of deviating agents is bounded by a constant, our LP-based formulations compute optimal signaling strategies in polynomial time. Otherwise, the problems are NP-hard.
\end{abstract}

\section{Introduction}

Bayesian persuasion, introduced by \cite{kamenica2011bayesian}, is a framework where a persuader (the principal) seeks to design a signaling scheme that conveys information to a decision-maker (the agent) to influence their choices in the principal's desired direction ~\cite{KamenicaSurvey, fujii2022algorithmic, celli2020private}.  
The computational analysis of multi-agent Bayesian persuasion has been studied extensively in recent years, considering various domains, such as 
congestion games \cite{vasserman2015implementing,castiglioni2021signaling,bhaskar2016hardness,azaria2014strategic,das2017reducing,griesbach2022public}, 
auctions \cite{li2019revenue,Gatti22}, security \cite{rabinovich2015information,zhou2023information}, and advertising \cite{kumar2023persuasion}.
In this paper, we consider persuading multiple agents in the presence of {\em externalities} in the agents' utilities---that is, the utility of an agent may depend on other agents' actions as well, instead of just their own---and we consider a fairly general model that is not tied to specific applications (e.g., congestion games).

Persuading multiple agents in the presence of externalities is computationally challenging for several reasons.
First, even when there are no externalities, public persuasion (i.e., when signals are sent through a public channel visible to every agent) is known to be intractable. It is shown that no polynomial time approximation scheme (PTAS) exists for computing optimal public persuasion policies unless P = NP \cite{bhaskar2016hardness,rubinstein2017honest,dughmi2019hardness}. When introducing externalities, the problem does not become computationally easier, as it is a strict generalization of the case without externalities.
For public persuasion, the above hardness result holds automatically, whereas for private persuasion, while the case without externalities can be trivially solved as independent single-agent persuasion problems against individual agents, this is not possible when externalities are present. 
In the presence of the externalities, the actions of the agents need to be coordinated, too. Indeed, coordination is necessary even when the principal does not possess any private information about the hidden state, whereby the goal of the principal is essentially to induce a correlated equilibrium among the agents that is optimal for the principal. Hence, the problem we address is essentially a combination of optimal persuasion and coordination. 

Given the computational barriers, we consider a more amenable but fairly natural special case of the model where the agents can be classified into a small number of {\em types}. 
Similarly to other works in game theory ~\cite{shrot2010agent, lovejoy2006optimal},
agents of the same type share identical utility function and are treated equitably in the utility functions of the other players (however, they still act independently and need not take the same action).
Hence, the agents' joint action can be succinctly represented as a vector that describes, for each agent type $T$ and action $a$, how many agents of type $T$ are performing action $a$.
Succinct representations are necessary for many applications as they avoid explicitly defining the utilities of possible joint actions, which grow exponentially with the number of agents.
From a computational perspective, however, this means that designing an efficient algorithm, which runs in time polynomial in the size of the problem representation, becomes harder.
Indeed, it has been shown that correlated equilibria are hard to optimize in many succinctly represented games \cite{papadimitriou2005computing}. 


\subsection{Contributions and Technical Novelties} 
We make the following contributions in this paper.

\begin{itemize}

\item We show that the classical revelation principle does not hold if agents can deviate jointly. We provide alternative revelation-principle-style characterizations for our model.
Speaking informally, the new characterizations show that the principal can restrict herself to sending the agents the actions they are recommended to play, along with a blocking profile, which encodes necessary information that explains, for every possible deviation, why the deviating agents will not benefit from this deviation.

\item Based on the characterizations, we present polynomial-time algorithms to compute optimal policies of the principal when joint deviations of the agents are restricted among a constant number of agents. In the case without this restriction, we show that computing optimal policies is intractable.

\item We consider both public and private persuasion as the two most commonly studied forms of persuasion in the literature. Additionally, we also introduce a semi-private interaction framework where, while the principal publicly recommend joint actions for the agents, additional private information is used to further reshape the agents' beliefs. This form of persuasion occurs naturally when agents can see or know about each other's actions, while the principal can communicate separately with each agent through private channels.
\end{itemize}

To obtain the above results the main challenges lie in finding ways to reduce the representations of several key element that would otherwise be exponential if not handled carefully. 
In the public and semi-private cases, we introduce {\em representative action vectors} to reduce the exponentially large space of the agents' joint actions to a polynomial space, without loss of optimality.
This technique fails in the private case, as we show through a counterexample. Hence, we further introduce the concept of {\em lottery policy} by exploiting a symmetry in the agents' roles. Such lottery policies allow representative action vectors to be applicable for private persuasion, too.
Besides the representations of joint actions, concise representations of information encoded in the principal's signals are also necessary.
In the semi-private and private cases, different agents may form different posterior beliefs about the state of the world (even if they are of the same type and are recommended to perform the same action). Since our characterizations indicate that the principal's signals need to encode sufficient information that explains, to every agent, why it will not be beneficial to deviate, it appears that jointly the space of signals for us to consider grows exponentially with the number of agents. We nevertheless show a way to represent the explanations that reduce the size of the signal space to a polynomial, by defining the explanations for each deviation instead of each agent.

\subsection{Motivating Examples}

Beyond the theoretical interests, the problem of persuading multiple agents in the presence of externalities is motivated by many real-world scenarios. Consider for example a navigation app (the principal) that would like to minimize its users' overall travel time. Each user is associated with an origin and a destination (which indicates their type) and needs to choose a route (their possible actions) between these two points. The traffic load in each route depends on the state of the world. There is a prior distribution over possible states, but on any given day, the app possesses information of the actual traffic load, while individual users only have a prior knowledge about the state.

As another example, consider authorities, such as the Food and Drug Administration (FDA) in the US, that are responsible for evaluating the efficacy of new drugs. The interaction between the FDA and a company that wants their new drug to be approved can be captured by the persuasion model \cite{wang2013bayesian}. 
In this case, the company (the principal) persuades the FDA committee members (the agents) to accept the drug. The drug could be effective or ineffective, and with side effects or without (i.e., four possible states). The company needs to decide on the clinical trial to carry out, and its results will inform the committee members about the effectiveness of the drug. A committee member can vote to approve the drug, disapprove it, or ask for additional trials, which are their actions in the model.
Their utilities can depend on the actual efficacy of the drug (the state) and the impact of a right/wrong decision on their own reputation and the FDA's reputations. The weights the members put on their own and the FDA's overall reputations may differ depending on their seniority (see more details in \Cref{sc:example}).
The decision is made by majority rule, so there are externalities in the members' utilities.





\subsection{Related Work}

Bayesian persuasion against a single agent is tractable \cite{KamenicaSurvey,dughmi2017algorithmic}, but the more general problem with multiple agents becomes harder. 
Hence, most work on multi-agent Bayesian persuasion has been done under special game structures.
The constraints of each domain were used in order to find tractable solutions. For example, in the context of singleton congestion games, with both private and public signals, \cite{zhou2022algorithmic} proved that efficient computation of optimal policy for the principal is achievable when the number of resources is held constant.
\cite{xu2020tractability} explored public persuasion without externalities, featuring binary actions but with a larger number of possible states. The study established that the optimal public signaling policy can be efficiently computed in polynomial time for arbitrary principal utility functions, given a constant number of states.
\cite{castiglioni2023public} showed that it is possible to compute bi-criteria approximations of optimal public signaling schemes in arbitrary persuasion problems but, again, without externalities in quasi-poly-time.

In terms of persuasion with externalities, it has been shown that even in Bayesian zero-sum games no polynomial time approximation scheme (PTAS) is available for finding optimal public persuasion policy unless P = NP \cite{rubinstein2017honest}.
To deal with the challenges, we introduce the concept of types. 
The concept of type has also appeared in many previous works. In mechanism design settings, a type means a possible type of utility function an agent may have. The mechanism designer does not know the agent's exact type but maintains a probabilistic belief about what it might be. 
For example, \cite{castiglioni2022bayesian} 
considered a persuasion scenario where the principal elicits the agent's type first.
The agent reports their type to the principal's mechanism, and the principal commits to a signaling policy according to the agent's report. No externalities are assumed in their model. Indeed, in our model, we assume that the types of the agents are known to the principal in advance, similarly to the standard persuasion model. The concept of type is more of a structure to group agents into small categories to allow us to derive efficient algorithms. 

Our work considers three different types of signaling channels, including semi-private persuasion, which lies in between public and private persuasion. 
Similar perspectives have been adopted in the work of \cite{babichenko2021multi}, who introduced a model that smoothly transitions between public and private persuasion. 
More specifically, they introduced a multi-channel communication structure, where each agent observes a subset of the principal's communication channels. The authors provided a complete characterization of the conditions under which one communication method outperforms another. They provided characterizations applicable to scenarios both with and without externalities among the agents. 
However, their computational studies focus on the case with: a constant number of worlds, no externalities and an additive utility function of the principal. For future work, extending our model to a multi-channel structure with externalities seems very promising. 
Our semi-private model is the first step in this direction. 


\section{The Problem}

A persuasion problem is given by a tuple
$\langle \Worlds, \priors, N, \Actions, \Types, (\utility_T)_{T \in \Types \cup \{0\}}, \Maxdeviators \rangle$. 
Specifically: 
$\Worlds$ is the set of the possible worlds. 
$\priors \in \Delta(\Worlds)$ is the prior distribution of the worlds. 
$N = \{1,\dots,n\}$ contains the indices of the $n$ agents involved. $\Actions$ is the set of actions available to the agents (w.l.o.g. it is the same for all agents).
$\Types \subseteq 2^N$ is a collection of disjoint subsets---referred to as {\em types}---of the agents, which defines a partition of $N$.
Finally, $d$ is the maximum number of agents that can deviate jointly from the principal's recommendation of actions. When $d=1$, only unilateral deviations are possible, and when $d > 1$, groups deviations are allowed.

We denote by $\Actionvectors=\Actions^{n}$ the set of all possible joint actions of the agents. 
Given a joint action $\actionvector \in \Actionvectors$, we call $\rho_\actionvector \in \mathbb{N}_{\ge 0}^{|\Types| \times  |\Actions|}$ the {\em action profile} corresponding to $\actionvector$, which anonymizes the agents' identities in $\actionvector$ and encodes only the number of agents of certain types who perform certain actions: $\rho_\actionvector(T, a) = |\{i \in T: a_i = a \}|$ for all $T \in \Types$ and $a \in \Actions$.
The utility $\utility_T( a, \rho \given \omega)$ of a type-$T$ agent depends on the action $a$ performed by the agent, the action profile $\rho$ encoding the joint action of the agents, as well as the current world $\omega$. 
Alternatively, given a joint action $\actionvector$, we can write the utility of an agent $i \in T$ as:
\[
\utility_i( \actionvector \given \omega) = u_T(a_i, \rho_{\actionvector} \given \omega).
\]
Similarly, we let $u_0(\rho \given \omega)$ be the principal's utility for an action profile $\rho$ in world $\omega$ (the principal does not perform actions herself).


\subsection{The Principal's Policy}

As in standard persuasion models, the principal has an information advantage over the agents: she is the only player who can observe the world. The principal reveals this private information to influence the agents' decision-making (choices of actions). Additionally, in our model the principal also has the power to correlate the agents' actions and serves as a correlation device. 
To perform both signaling and correlation, we consider policies of the form $\sigma : \Omega \to \Delta(\Actionvectors \times \Signalvectors)$, which map the world observed by the principal to a meta-signal $s = (\actionvector, \signalvector) \in \Actionvectors \times \Signalvectors$. The joint action $\actionvector$ functions as a correlation signal and encodes the action $a_i$ each agent is recommended to perform, while $\signalvector = (g_i)_{i\in N}$ provides additional information about the world to influence the agents' beliefs, with $g_i$ sent to each agent $i$.  

We consider both public and private persuasion.
In public persuasion, the communication channel is public and the entirety of the meta-signal $s$ is seen by every agent. 
In private persuasion, each agent $i$ only observes the part of signal $(a_i, g_i)$ that is sent to them. 
In general, we use $s_i$ to denote the part of $s$ observed by agent $i$.
Hence, for every $i \in N$, we have
\begin{itemize}
\item $s_i = (\actionvector, \signalvector)$ in public persuasion, and
\item $s_i = (a_i, g_i)$ in private persuasion.
\end{itemize}
The policies used in these two cases are referred to as public and private policies, respectively.

Besides these two most commonly studied persuasion modes, we also consider an in-between mode, which we call {\em semi-private} persuasion. 
In semi-private persuasion, the joint action $\actionvector$ in the principal's signal is sent publicly while $\signalvector$ remains private. 
Essentially, private persuasion becomes semi-private when the agents can see the actions to be performed by the other agents.
Hence,
\begin{itemize}
\item $s_i = (\actionvector, g_i)$ in semi-private persuasion.
\end{itemize}
For ease of description, 
we write $s = (s_i)_{i \in N} = (\actionvector, \signalvector)$.

\subsection{Agents' Belief and Stability of the Policy}

Upon receiving a meta-signal $s_i$, the agent performs a belief update according to Bayes' rule and derives the following posterior probability for each world $\omega \in \Omega$, as well as each possible joint action ${\actionvector} \in \Actionvectors$:
\[
\Pro({\actionvector}, \omega \given s_i) = 
\frac{\mu(\omega) \cdot \sigma(s_i, {\actionvector} \given \omega)}{\sum_{\omega' \in \Omega} \mu(\omega') \cdot \sigma(s_i \given \omega')},
\]
where $\Pro$ denotes the probability measure induced by the policy, and we use the notation:
\[
\sigma(x \given \omega) = \sum_{s' \in S: s' \vdash x}  \sigma(s' \given \omega),
\]
where $s' \vdash x$ means $s'$ satisfies $x$, e.g., $s' = (\actionvector', \signalvector') \vdash (s_i, \signalvector)$ means $s'_i = s_i$ and $\signalvector' = \signalvector$, and $s' \vdash s_i$ means $s'_i = s_i$.
In the private case, the agents also maintain a probabilistic belief about the other agents' actions.

The stability of a policy is concerned with whether some subset of the agents can simultaneously benefit from some joint deviation. We only consider {\em non-transferable} utility in this paper, and we do not consider further communication between the agents (which may further change their posterior beliefs). Under these conditions, the stability of a policy is defined as follows.


\begin{definition}[Stable policy]
Given a policy $\sigma: \Omega \to S$, a signal $s = (\actionvector, \signalvector) \in S$ is {\em unstable} if there exists a set $N' \subseteq N$ containing at most $d$ agents and an action vector $\actionvector' = (a'_i)_{i \in N'}$, such that for all $i \in N'$: 
\begin{align}
\label{eq:stability}
\!\!\!
\sum_{\tilde{\actionvector} \in \Actionvectors, \omega \in \Omega} \!\!\!
\Pro(\tilde{\actionvector}, \omega \given s_i ) \cdot  
\Big( 
u_i( \tilde{\actionvector} \oplus \actionvector' \given \omega) - u_i( \tilde{\actionvector} \given \omega)
\Big)
> 0,
\end{align}
where $\tilde{\actionvector} \oplus \actionvector'$ denotes the joint action resulting from replacing $\tilde{a}_i$ with $a'_i$ for all $i \in N'$.
The signal is {\em stable} otherwise.
The policy $\sigma$ is stable if all signals in $\supp(\sigma)$ are stable.\footnote{$\supp(\cdot)$ denotes the support set, i.e., for a policy $\sigma: \Omega \to S$, $\supp(\sigma) = \{ s \in S : \sigma(s) > 0\}$.} 
\end{definition}

In the public and semi-private cases, since agents know the joint action deterministically, the instability constraint in the definition simplifies to 
\begin{align}
\label{eq:stability-pub-sem}
\sum_{\omega \in \Omega} \Pro(\omega \given s_i ) \cdot 
\Big( 
u_i( \actionvector \oplus \actionvector' \given \omega) - 
u_i( \actionvector \given \omega) 
\Big) 
> 0
\end{align}
We will sometimes abuse the notation slightly and write 
\[
u_i(\actionvector \given p) = \E_{\omega \sim p} u_i(\actionvector \given \omega)
\]
for a posterior $p \in \Delta(\Omega)$.
So, \Cref{eq:stability-pub-sem} further simplifies to $u_i( \actionvector \oplus \actionvector' \given p_i) > u_i( \actionvector \given p_i)$ by letting $p_i = \Pro(\omega \given s_i )$. 


Hence, a stable policy incentivizes the agents to perform the recommended actions. It yields utility
\[
u_0(\sigma) = \E_{\omega \sim \mu} \E_{(\actionvector, \signalvector) \sim \sigma(\cdot \given \omega)} u_0(\actionvector \given \omega)
\]
for the principal.
Our goal is to find, among all stable policies, one that maximizes the principal's utility (or decide correctly that no stable policy exists).

\subsection{The FDA Example}
\label{sc:example}

\newcommand{\rej}{\text{\textnormal{rej}}}
\newcommand{\acc}{\text{\textnormal{acc}}}

Before we present our results, we describe a concrete example to illustrate the model defined above.
Consider a simplified version of the FDA example we mentioned in the introduction. A drug can be either {\em safe} or {\em unsafe}, and the FDA decides whether to {\em accept} or {\em reject} a new drug.
The utility of the company (the principal) is $0$ if their drug is rejected, $2$ if their drug is accepted and safe, $1$ if it is accepted but unsafe.
The FDA uses the majority rule to make their decision, where each of their senior members has two votes and each junior member has one.

The states of the drugs, safe or unsafe, correspond to the two possible worlds $\omega_+$ and $\omega_-$, with priors $\priors(\omega_+) = 0.25$ and $\priors(\omega_-) = 0.75$. 
Suppose that there are $2$ senior members ($T_1 = \{1, 2\}$) and $3$ junior members ($T_2 = \{3,4,5\}$). 
The possible actions of the agents include voting for rejection ($a^\rej$) and for acceptance ($a^\acc$).
For an action profile $\rho$, the outcome of the vote is denoted by $\acc(\rho)$.

The senior members place high importance on the reputation of the FDA.
Each of them gets a high penalty ($-3$) if s/he votes in favor of an unsafe drug and the drug is accepted according to the majority vote.
A smaller penalty ($-2$) is incurred if the senior member votes against an unsafe drug but the drug is accepted. 
Similarly, utilities for the other possible cases are listed in the table below.
The junior members, on the other hand, are more concerned with their own reputation than the FDA's reputation. They get a positive utility when warning against an unsafe drug, even though the drug was eventually accepted by the majority.



\begin{table}[h]
\centering
\small
\begin{tabular}{@{\extracolsep{4pt}}rrr}
\toprule
& $\omega_-$  & $\omega_+$ \\ 
\midrule
$a^\rej,\, \acc(\rho)$ & $-2$ & $-1$ \\
$a^\acc,\, \acc(\rho)$ & $-3$ & $1$ \\
$\cdot,\, \lnot \acc(\rho)$ & $0$ & $0$ \\
\bottomrule\\[-1mm]
\multicolumn{3}{c}{$u_{T_1}(a, \rho|\omega)$}
\end{tabular}
\quad
\begin{tabular}{@{\extracolsep{4pt}}rrr}
\toprule
& $\omega_-$  & $\omega_+$ \\ 
\midrule
$a^\rej,\, \acc(\rho)$ & $2$ & $-2$ \\
$a^\acc,\, \acc(\rho)$ & $-3$ & $1$ \\
$\cdot,\, \lnot \acc(\rho)$ & $0$ & $0$ \\
\bottomrule\\[-1mm]
\multicolumn{3}{c}{$u_{T_2}(a, \rho|\omega)$}
\end{tabular}
\end{table}




Consider the action vector 
$\actionvector = (a^\rej, a^\acc, a^\acc, a^\rej, a^\acc)$.
The corresponding action profile $\rho_\actionvector$ is the function such that $\rho_\actionvector(T_1, a^\rej) = \rho_\actionvector(T_1, a^\acc) = \rho_\actionvector(T_2, a^\rej) = 1$ and $\rho_\actionvector(T_2, a^\acc) = 2$.
Since the majority votes for acceptance in this profile, the utility of the principal is $u_0(\rho_\actionvector|\omega_+)=2$ if the drug is safe; otherwise, $u_0(\rho_\actionvector|\omega_-)=1$.

\section{Public Persuasion}
\label{sc:public}

To explain our algorithm for public persuasion, we first analyze the structure of optimal policies and prove a result similar to the {\em revelation principle} \cite{kamenica2011bayesian}. 
In the case {\em without} externalities, the revelation principle states that it is without loss of optimality to consider incentive compatible (IC) and direct policies. Namely, the policy directly recommends actions to the agents and incentivizes them to perform these recommended actions. The additional signal $\signalvector$ in the meta-signal is unnecessary in this case. This elegant result simplifies the design of policies: in particular, it sets an upper bound to the size of the meta-signal space.
However, it does not hold any more when there are externalities and multiple agents can deviate together (i.e., $d>1$).
We provide an example to illustrate this.

\subsection{An Example}
\label{exm:additional-information}

Suppose that there are two worlds $\omega_1$ and $\omega_2$, with equal prior $\priors(\omega_1) = \priors(\omega_2) = 0.5$.
There are only two agents and they are of the same type, i.e., $\Types = \{T\}$ and $T = \{1, 2\}$.
There are three actions available to them: $A = \{a_1, a_2, a_3\}$. We have $d = 2$, so the agents can deviate jointly.

The agents' utilities are $-1$ for all the action profiles except two special profiles $\rho_1$ and $\rho_2$, where $\rho_1(a_1) = 2$ and $\rho_2(a_2) = \rho_2(a_3) = 1$ (we omit types since there is only one type).
The agents' utilities for these two profiles are given in the tables below. Irrespective of the world, the principal's utility is $1$ for $\rho_1$ and $0$ for all other profiles.
\begin{table}[h]
\centering
\small
\begin{tabular}{@{\extracolsep{4pt}}rrrr}
\toprule
 & $\omega_1$ & $\omega_2$ \\ 
\midrule
$a_1$ & $1$ & $1$ \\
\bottomrule\\[-2mm]
\multicolumn{3}{c}{$u_T(a, \rho_1|\omega)$}
\end{tabular}
\qquad\qquad
\begin{tabular}{@{\extracolsep{4pt}}rrrr}
\toprule
 & $\omega_1$ & $\omega_2$ \\ 
\midrule
$a_2$ & $0$ & $10$ \\
$a_3$ & $10$ & $0$ \\
\bottomrule\\[-2mm]
\multicolumn{3}{c}{$u_T(a, \rho_2|\omega)$}
\end{tabular}
\end{table}

Hence, in this example, the maximum possible expected utility of the principal is $1$, which can be obtained only if the agents are incentivized to follow $\rho_1$. It can be seen that this is achieved by the policy which always reveals the world to the agents while recommending $\actionvector^1 = (a_1, a_1)$ (which is the only action vector corresponding to $\rho_1$).
This policy is also stable. Indeed, given the utility definition, it would only be beneficial for the agents to deviate to $\rho_2$. However, in $\rho_2$, one of them will have to receive a lower utility of $0$ and hence block this joint deviation.

However, recommending $\actionvector^1$ without using any additional information is not stable. Indeed, this makes $\actionvector^1$ the only signal used in the policy, irrespective of the persuasion mode (public, semi-private, or private).
The signal is uninformative as a result, and the agents' beliefs remain the same as the prior after receiving the recommendation.
In this case they will always prefer to deviate to $\rho_2$ and obtain an expected utility of $5$. 
Therefore, for all the persuasion modes considered, signaling only the recommended actions to the agents may only produce suboptimal policies.


It turns out that while the revelation principle breaks in its original form, a variant of it can be established to bound the size of the meta-signal space. We present a new revelation-principle-style characterization next.

\subsection{Revelation Principle for Public Persuasion}

We identify meta-signals that can be merged without changing the induced outcome to reduce the meta-signal space as much as possible. To this end, we introduce the {\em signatures} of meta-signals.
We begin by defining {\em representatives} and {\em blocking profiles}.

We fix an arbitrary {\em representative set} $\Representativeactionvectors \subseteq \Actionvectors$, in which every profile $\rho \in P$ finds exactly one joint action $\bar{\actionvector} \in \Representativeactionvectors$ such that $\rho_{\bar{\actionvector}} = \rho$. Namely, $\bar{\actionvector}$ is representative of the joint actions whose action profile is $\rho$.
Since the representative set defines a bijection between the action profiles and the representative actions, we will use a representative action $\bar{\actionvector}$ and its corresponding action profile $\rho_{\bar{\actionvector}}$ interchangeably.

\begin{definition}[Representative]
A joint action $\bar{\actionvector}$ is representative of another joint action $\actionvector$ if $\bar{\actionvector} \in \Representativeactionvectors$ and $\rho_{\bar{\actionvector}} = \rho_\actionvector$.    
\end{definition}

In an action profile, we refer to the tuple $(T,a) \in \Types \times A$ as the {\em subtype} of the agents who are of type $T$ and perform action $a$.
Let $\Deviationprofiles_*$ denote the set of all possible deviations:
\[
\Deviationprofiles_* {=} 
\left\{ 
\delta : \Types {\times} A^2 \to \mathbb{Z}_{\ge 0}
\middle|\! 
\begin{array}{l}
1 \le \sum_{T, a, a'} \delta(T, a, a') \le \Maxdeviators,\\[2mm] 
\delta(T, a, a) = 0 \ \forall a, T
\end{array}
\!\! \right\}.
\]
Each deviation $\delta$ specifies the number $\delta(T, a, a')$ of agents of subtype $(T, a)$ who deviate to action $a'$. The total number of deviating agents is bounded by $d$ by assumption, while there is no need to deviate from an action to itself.
Next, let $\Deviationprofiles_\rho$ contain deviations that are feasible from $\rho$: 
\begin{align}
\label{eq:D-rho}
&\Deviationprofiles_\rho =
\left\{ 
\delta \in \Deviationprofiles_*
\middle|  
\sum\limits_{a' \in \Actions} \delta(T, a, a')  \le \rho(T, a) \ \forall a, T
\right\},
\end{align}
i.e., the number of type-$T$ agents who deviate from action $a$ must be consistent with the number of type-$T$ agents who originally perform this action in $\rho$.


The {\em blocking profile} of a public signal is then defined as follows.
It describes how each possible deviation is blocked by agents of certain subtypes.

\begin{definition}[Blocking profile]
The {\em blocking profile} of a meta-signal $s = (\actionvector, \signalvector)$ is the set $\beta =$ 
\begin{align*}
&\left\{
\begin{array}{l}
\!\!
(\delta, T,a,a') \in \\
\quad D_\rho {\times} \Types {\times} A^2 
\end{array}
\!\middle|\!
\begin{array}{l}
\delta(T,a, a') > 0, \\[2mm]
u_T(a, \rho_\actionvector \given p_s) \ge u_T(a', \rho_\actionvector \oplus \delta \given p_s)
\end{array}
\!\!\right\},
\end{align*}
where 
$p_s = \Pro(\cdot \given s)$ denotes the posterior induced by $s$, and $\rho_\actionvector \oplus \delta$ denotes the action profile resulting from applying deviation $\delta$ to $\rho_\actionvector$.
\end{definition}

Namely, $(\delta, T, a, a') \in \beta$ means that the agents of subtype $(T,a)$ are unwilling to deviate to playing $a$, hence blocking $\delta$.
It is straightforward that if a meta-signal is stable, then its blocking profile covers every $\delta \in D_\rho$, i.e., it contains at least one element involving $\delta$.
We denote by $\pubblockingprofiles_\rho$ the set of subsets of $D_\rho \times \Types \times A^2$ that covers every $\delta \in D_\rho$ in the public case. In other words, $\pubblockingprofiles_{\rho_\actionvector}$ consists of all possible blocking profiles of stable meta-signals $s = (\actionvector, \signalvector)$.
For simplicity, we will also write $\pubblockingprofiles_{\rho_\actionvector}$ as $\pubblockingprofiles_{\actionvector}$.

Combining a representative and a blocking profile gives the signature of a meta-signal.

\begin{definition}[Signature]
\label{def:typed-sig}
The {\em public signature} of a meta-signal $s = (\actionvector, \signalvector)$, denoted $\sigpub(s)$, is the tuple $(\bar{\actionvector}, \beta)$ where $\bar{\actionvector}$ is the representative of $\actionvector$ and $\beta$ is the blocking profile of $s$.
\end{definition}


Intuitively, meta-signals with the same signature can be merged without affecting the outcome induced by the policy. The support of the resulting policy is then bounded by the number of distinct signatures there are.
This leads to the following revelation-principle-style characterization for public persuasion.

\begin{restatable}{theorem}{thmRpPublic}
\label{thm:rp-public}
For any stable public policy $\sigma : \Omega \to \Delta(\Actionvectors \times G)$, there exists a stable public policy $\bar{\sigma} : \Omega \to \Delta(C)$ that yields as much utility for the principal as $\sigma$ does, where 
$C = \{ (\bar{\actionvector}, \beta) : \bar{\actionvector} \in \Representativeactionvectors, \beta \in \pubblockingprofiles_{{\bar{\actionvector}}} \}$. Moreover, the signature of each meta-signal $c \in C$ is exactly $c$.\footnote{Omitted proofs can be found in the appendix.}
\end{restatable}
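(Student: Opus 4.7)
The plan is to construct $\bar\sigma$ by grouping together all meta-signals in $\supp(\sigma)$ that share a common public signature, and replacing each group with a single signal labeled by that signature. Two structural facts underlie everything: the principal's utility $u_0(\actionvector\given\omega) = u_0(\rho_\actionvector\given\omega)$ depends on the joint action only through its action profile, and each type utility $u_T(a,\rho\given p) = \E_{\omega\sim p}\,u_T(a,\rho\given\omega)$ is linear in the posterior $p$.

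First I would set
\[
\bar\sigma(c \given \omega) \;=\; \sum_{s \in \supp(\sigma):\, \sigpub(s) = c} \sigma(s \given \omega), \qquad c \in C.
\]
This is a valid policy: since the signatures partition $\supp(\sigma)$, the total mass placed on $C$ in every world is $1$. Utility preservation is immediate, since every $s$ mapped to $c = (\bar{\actionvector}, \beta)$ has $\rho_\actionvector = \rho_{\bar{\actionvector}}$, so the principal's per-world contribution from each such $s$ equals $u_0(\rho_{\bar{\actionvector}}\given\omega)$; collecting by signature gives $u_0(\bar\sigma) = u_0(\sigma)$.

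The main step is verifying that each $c = (\bar{\actionvector},\beta) \in \supp(\bar\sigma)$ is stable under $\bar\sigma$. Writing $\Pro(s) = \sum_\omega \mu(\omega)\,\sigma(s\given\omega)$, the posterior induced by $c$ is the convex combination
\[
p_c \;=\; \sum_{s:\,\sigpub(s)=c} \frac{\Pro(s)}{\sum_{s':\,\sigpub(s')=c}\Pro(s')}\; p_s.
\]
For any $(\delta,T,a,a') \in \beta$, the inequality $u_T(a,\rho_{\bar{\actionvector}}\given p_s) \ge u_T(a',\rho_{\bar{\actionvector}}\oplus\delta\given p_s)$ holds at each $p_s$ in the mixture by definition of signature, and since both sides are linear in $p$ the inequality transfers to $p_c$. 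Hence $\beta$ is contained in the blocking profile of $c$ under $\bar\sigma$, and because $\beta \in \pubblockingprofiles_{\bar{\actionvector}}$ already covers every $\delta \in \Deviationprofiles_{\rho_{\bar{\actionvector}}}$, $c$ is stable.

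For the ``moreover'' clause I would run the same linearity argument in reverse: any $(\delta,T,a,a') \notin \beta$ satisfies the strict reverse inequality at every $p_s$ in the mixture, and strict inequalities between linear functionals are preserved under convex combinations, so the strict inequality persists at $p_c$. Thus the blocking profile of $c$ under $\bar\sigma$ is exactly $\beta$, and because $\bar{\actionvector}$ is already its own representative we obtain $\sigpub(c) = c$. The main obstacle I anticipate is precisely this last point: one must treat non-membership in a blocking profile as a strict inequality and exploit the fact that both non-strict and strict linear inequalities survive averaging, so that the signature is preserved \emph{exactly} rather than merely contained. All other steps are bookkeeping once the linearity-of-utility observation is in place.
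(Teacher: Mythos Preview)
Your proposal is correct and matches the paper's proof essentially line for line: the same signature-merging construction, the same convex-combination formula for the merged posterior, and the same linearity argument for transferring the blocking inequalities. Your treatment of the ``moreover'' clause is arguably more careful than the paper's (which asserts an ``if and only if'' without isolating the strict-inequality direction); just note that the strict reverse inequality applies only to tuples with $\delta(T,a,a')>0$, which is harmless since tuples with $\delta(T,a,a')=0$ are excluded from every blocking profile by definition.
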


The above result indicates that it suffices to encode in each meta-signal the recommended actions, as well as a blocking profile to explain to the agents why it would not be beneficial for them to deviate in any possible way.
The signature of every meta-signal used in the policy is exactly the meta-signal itself.
Based on the theorem, we design an algorithm to compute optimal public policies next.

\subsection{Computing an Optimal Public Policy}

We use the following LP to compute an optimal public policy, with variables
$\sigma(s_{\bar{\actionvector}, \beta} \given \omega)$ for $\bar{\actionvector} \in \Representativeactionvectors, \beta \in \pubblockingprofiles_{{\bar{\actionvector}}}, \omega \in \Omega$.
\begin{align}
\label{eq:lp-pub-obj}
    \max \quad 
    \sum_{\omega \in \Worlds} \priors(\omega) \sum_{\bar{\actionvector} \in \Representativeactionvectors}  \sum_{\beta \in \pubblockingprofiles_{{\bar{\actionvector}}}} \sigma(s_{\bar{\actionvector}, \beta} \given \omega) \cdot \utility_0(\bar{\actionvector} \given \omega),
\end{align}
subject to the following constraint
for every $\bar{\actionvector} \in \Representativeactionvectors$, 
$\beta \in \pubblockingprofiles_{{\bar{\actionvector}}}$, and every tuple $(\delta, T, a, a') \in \beta$:
\begin{align}
    \sum_{\omega \in \Worlds} \priors(\omega) \cdot \sigma(s_{\bar{\actionvector}, \beta} \given \omega) \cdot \utility_{T}(a', \rho_{\bar{\actionvector}} \oplus \delta \given \omega) \le \nonumber \\ 
    \sum_{\omega \in \Worlds} \priors(\omega) \cdot \sigma(s_{\bar{\actionvector}, \beta} \given \omega) \cdot \utility_{T}(a, \rho_{\bar{\actionvector}} \given \omega).
    \label{eq:lp-pub-cons-1}
\end{align}
Additionally, for all $\omega \in \Worlds$:
\begin{align}
\label{eq:sigma-valid-1-public}
&\sum_{\bar{\actionvector} \in \Representativeactionvectors} \sum_{\beta \in \pubblockingprofiles_{\bar{\actionvector}}} \sigma(s_{\bar{\actionvector}, \beta} \given \omega) = 1 \\
\label{eq:sigma-valid-2}
& \sigma(s_{\bar{\actionvector}, \beta} \given \omega) \ge 0 \quad \text{for all } \bar{\actionvector} \in \Representativeactionvectors, \beta \in \pubblockingprofiles_{\bar{\actionvector}}
\end{align}
so that $\sigma(\cdot \given \omega)$ is a valid distribution.

In other words, the variables encode a policy characterized by \Cref{thm:rp-public}.
Under the assumption that this policy is stable, the objective function \Cref{eq:lp-pub-obj} captures exactly the expected utility of the principal. 
\Cref{eq:lp-pub-cons-1} further ensures that the signature of each meta-signal is exactly itself, which follows by (the negation of) \Cref{eq:stability-pub-sem}.
Consequently, the policy is indeed stable since by definition $\pubblockingprofiles_{\rho}$ only contains blocking profiles under which $\rho$ is stable.





\paragraph{Time Complexity}

Given a constant $d$, and a constant number of types and actions in $\Types$ and $A$, the size of the above LP grows only polynomially in the problem size. 
For general $d$, however, the formulation can grow exponentially. Indeed, this is inevitable: when $d$ is part of the input, computing an optimal public policy is NP-hard. The reduction also applies to the semi-private and private cases.

\begin{restatable}{theorem}{thmOptSubPolyTime}
    \label{thm:opt-pub-poly-time}
     For constant $d$, $|\Types|$, and $|A|$,
    an optimal public policy can be computed in polynomial time.
\end{restatable}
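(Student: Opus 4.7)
The plan is to show that the LP in \Cref{eq:lp-pub-obj}--\Cref{eq:sigma-valid-2} has polynomial size under the constant-parameter assumptions, and then invoke polynomial-time LP solvability. Correctness will follow from \Cref{thm:rp-public}, so the real work is a counting argument on the index sets $\Representativeactionvectors$, $\Deviationprofiles_\rho$, and $\pubblockingprofiles_{\bar{\actionvector}}$.

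First I would bound $|\Representativeactionvectors|$. Each representative corresponds to an action profile $\rho:\Types\times A\to\mathbb{Z}_{\ge 0}$ with $\sum_{a\in A}\rho(T,a)=|T|\le n$ for every $T\in\Types$, so $|\Representativeactionvectors|\le (n+1)^{|\Types|\cdot|A|}$, which is polynomial in $n$ when $|\Types|$ and $|A|$ are constants. Next, a deviation is a function $\Types\times A^2\to\mathbb{Z}_{\ge 0}$ whose entries sum to at most $d$, so $|\Deviationprofiles_*|\le \binom{d+|\Types|\cdot|A|^2}{|\Types|\cdot|A|^2}$, a constant; hence so is $|\Deviationprofiles_\rho|\le|\Deviationprofiles_*|$. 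Each blocking profile is a subset of $\Deviationprofiles_{\rho_{\bar{\actionvector}}}\times\Types\times A^2$, giving $|\pubblockingprofiles_{\bar{\actionvector}}|\le 2^{|\Deviationprofiles_{\rho_{\bar{\actionvector}}}|\cdot|\Types|\cdot|A|^2}$, also a constant. Consequently the number of decision variables, $|\Representativeactionvectors|\cdot\max_{\bar{\actionvector}}|\pubblockingprofiles_{\bar{\actionvector}}|\cdot|\Worlds|$, is polynomial in the input, and the constraints \Cref{eq:lp-pub-cons-1}, \Cref{eq:sigma-valid-1-public}, and \Cref{eq:sigma-valid-2} together contribute a polynomial number of inequalities (each $\beta$ contributes constantly many stability constraints). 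Enumerating these index sets by straightforward combinatorial generation also takes polynomial time, so the LP can be explicitly constructed and then solved by the ellipsoid or an interior-point method.

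The step I expect to require the most care is verifying that the LP's feasible region exactly captures stable public policies of the form produced by \Cref{thm:rp-public}. Here \Cref{eq:sigma-valid-1-public}--\Cref{eq:sigma-valid-2} guarantee a valid conditional distribution over signatures; the objective \Cref{eq:lp-pub-obj} equals the principal's expected utility because the representative $\bar{\actionvector}$ determines the action profile and hence $u_0$; and each inequality \Cref{eq:lp-pub-cons-1}, after dividing by $\sum_\omega\priors(\omega)\sigma(s_{\bar{\actionvector},\beta}\given\omega)$, is precisely the negation of the instability condition \Cref{eq:stability-pub-sem} for the tuple $(\delta,T,a,a')\in\beta$. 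Since every $\beta\in\pubblockingprofiles_{\bar{\actionvector}}$ covers every $\delta\in\Deviationprofiles_{\rho_{\bar{\actionvector}}}$ by definition, every possible deviation from $\bar{\actionvector}$ is blocked, so each meta-signal in the support is stable. Combined with \Cref{thm:rp-public}, which says it is without loss of optimality to restrict to such signature-indexed policies, this shows the LP optimum equals the optimal principal utility over all stable public policies, yielding the claimed polynomial-time algorithm.
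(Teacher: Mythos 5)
Your counting argument is correct and matches the paper's proof: both establish that $\Deviationprofiles_\rho$ and hence $\pubblockingprofiles_{\bar{\actionvector}}$ have constant size under the stated assumptions, and that $|\Representativeactionvectors|$ is polynomial in $n$, so the LP has polynomial size. Your additional verification that the LP's feasible region coincides with stable signature-indexed policies is already argued in the text surrounding \Cref{eq:lp-pub-cons-1} and is a fine inclusion, but the route is essentially identical to the paper's.
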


\begin{restatable}{theorem}{thmHardness}
\label{thm:public-hardness}
    When $d$ is an input to the problem, computing an optimal policy is NP-hard for public, semi-private, and private persuasion.
\end{restatable}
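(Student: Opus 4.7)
I would prove \Cref{thm:public-hardness} by a polynomial-time reduction from the decision version of \textsc{Maximum Independent Set}. Given $G = (V, E)$ with $|V| = n$ and target size $k$, I would construct a persuasion instance in which the three persuasion modes differ only in what portion of the meta-signal each agent observes: a single world $\Omega = \{\omega_0\}$ (so the informational aspect is trivial), one agent per vertex grouped into $n$ singleton types $\Types = \{T_v : v \in V\}$, two actions $A = \{0,1\}$, deviation bound $d = n$, principal utility $u_0(\rho) = \sum_v \rho(T_v, 1)$, and agent utilities
\[
u_{T_v}(1, \rho) = 1 - M \sum_{u : (u,v) \in E} \rho(T_u, 1),\qquad u_{T_v}(0, \rho) = 0
\]
for a large constant $M > 2n^2$.

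\textbf{Public and semi-private modes.} In both modes every agent learns the recommended action vector $\actionvector$, so the posterior over $\Omega$ is trivial and each recommended profile $\rho_S$ (with $S$ the set of vertices playing $1$) must be individually stable. Unilateral deviations alone pin the stable pure profiles to those where $S$ is a maximal independent set: a vertex in $S$ strictly prefers switching to $0$ whenever it has a neighbor in $S$ (forcing $S$ to be independent), and a vertex outside $S$ strictly prefers switching to $1$ whenever it has no neighbor in $S$ (forcing $S$ to be maximal). The choice of $M$ easily rules out profitable coalitions of size up to $d=n$, since any ``leaver'' inside an IS strictly loses utility $1$ by moving to $0$, and any ``joiner'' outside a maximal IS cannot preserve independence. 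Hence the optimum equals $\alpha(G)$, the MIS number of $G$, so any polynomial-time algorithm for public or semi-private persuasion would decide MIS.

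\textbf{Private mode.} Here stability couples profiles through agents' posteriors. Let $q(v) = \Pr_\sigma[a_v = 1]$ and $x_{uv} = \Pr_\sigma[a_u = a_v = 1]$ under any stable private policy $\sigma$. The stability constraint at signal $a_v = 1$ reduces to $\sum_{u : (u,v) \in E} x_{uv} \le q(v)/M$; summing over $v$ and counting each edge twice yields $\sum_{(u,v) \in E} x_{uv} \le n/(2M)$. By a union bound, the random $1$-set $S$ fails to be an independent set with probability at most $n/(2M)$, hence
\[
\mathbb{E}_\sigma[\,|S|\,] \le \alpha(G) + n \cdot \tfrac{n}{2M} < \alpha(G) + \tfrac{1}{4}.
\]
Conversely, a point mass on any maximum IS is a stable private policy achieving value $\alpha(G)$. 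Thus the optimal private value lies in $[\alpha(G), \alpha(G) + 1/4)$ and rounds down to $\alpha(G)$, establishing NP-hardness.

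\textbf{Main obstacle.} The public and semi-private cases will be cleaner, since the revelation-principle-style characterization (\Cref{thm:rp-public} and its semi-private analogue) localizes stability to individual meta-signals. The hardest step will be the private upper bound: I need to argue that correlated randomization cannot substantially boost the principal's utility beyond $\alpha(G)$ even though individual meta-signals need not be stable in isolation. The Markov/union-bound device with $M = \Theta(n^2)$ is the key tool, and it must be tight enough to preserve an integer threshold separating the MIS decision between $k$ and $k+1$.
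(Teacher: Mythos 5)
Your reduction has a genuine model mismatch that makes it fail as a polynomial-time reduction within the paper's framework, and it also fails to isolate $d$ as the source of hardness.

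The core problem is that you use $n$ singleton types, $\Types = \{T_v : v \in V\}$. The paper's entire motivation for introducing types is to obtain a succinct representation: with constant $|\Types|$ and $|A|$, the profile space $P$ is polynomial and each utility $u_T(a,\rho\given\omega)$ can be specified as a polynomial-size table. With $|\Types| = n$ singleton types and two actions, the profile space has size $2^n$, so the utility tables you must write down in the constructed instance are exponential. Your utility is given by a concise formula, but the paper works with tabular utilities (see the discussion of succinct representation and the explicit tabular form in the examples). In that model your reduction simply does not produce a polynomial-size instance, so it does not yield NP-hardness. Even if one generously allowed formula-based utilities, you would be proving hardness for a different problem than the one the tractability result (\Cref{thm:opt-pub-poly-time}) addresses.

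Related and equally important: the theorem is meant to complement the polynomial-time algorithm for constant $d$, $|\Types|$, $|A|$, so the hardness should hold while keeping $|\Types|$ and $|A|$ constant and making only $d$ grow. Your construction does the opposite. In fact your public/semi-private argument relies only on \emph{unilateral} deviations to pin the stable pure profiles to maximal independent sets; the same reduction gives hardness already at $d=1$, entirely driven by $|\Types| = n$. That is evidence that you have not captured the phenomenon the theorem is about. The paper's reduction keeps a single agent type, four actions, and $|V|+1$ worlds, and sets $\Maxdeviators = n-1$; the graph structure is encoded into worlds and the posteriors the agents form upon a public recommendation, while the large group deviations ($\rho'_i$, $\rho_\ell$, $\rho'_{|V|+1}$) are precisely what forces the agents' posteriors to encode a size-$k$ vertex cover. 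That is where the dependence on $d$ enters.

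Your private-mode Markov/union-bound device (bounding $\sum_{(u,v)\in E} x_{uv}$ and then $\E[|S|]$) is a nice piece of reasoning and is internally correct for the game you built, but it is deployed on the wrong instance. To repair the proof you would need a reduction that (i) uses $O(1)$ types and actions and (ii) exploits deviations of unbounded size $d$; Vertex Cover with posteriors over $O(|V|)$ worlds, as the paper does, is one way to achieve both.
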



\section{Semi-private Persuasion}
\label{sc:semi-private}


We now consider the semi-private case. Similarly to the public case, a blocking profile should contain information about how each possible deviation is blocked. However, in the semi-private case we need to define the profiles separately for different agents because of divergent beliefs. 
This may result in an exponential growth of blocking profiles when the number of agents increases (i.e., each profile corresponds to a possible combination of $n$ sets of deviations).

\subsection{Representing Semi-Private Blocking Profiles}
To address the above issue, we explore the minimum possible representation of a blocking profile. 
Key to our approach is
\Cref{lmm:hall-variant}, a generalization of Hall's theorem \cite{hall1934representation}.
With this result, we can prove \Cref{lmm:blocking-conditions} to find a function $\gamma$ that encodes sufficient information for us to derive a concise representation of blocking profiles.



\begin{restatable}{lemma}{lmmhallvariant}
\label{lmm:hall-variant}
Suppose that
$B_1, \dots, B_m \subseteq \{1, \dots, \ell\}$, and $r_1, \dots, r_m \in \mathbb{R}$ are non-negative.
There exist $m$ disjoint sets $\widetilde{B}_1, \dots, \widetilde{B}_m$, $\widetilde{B}_i \subseteq B_i$ and $|\widetilde{B}_i| = r_i$ for all $i=1,\dots,m$, if and only if it holds for every set $M \subseteq \{1,\dots,m\}$ that $|\bigcup_{i \in M} B_i| \ge \sum_{i \in M} r_i$.
\end{restatable}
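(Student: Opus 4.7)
The plan is to reduce the claim to the classical Hall's marriage theorem via a replication argument. Throughout I assume $r_1, \ldots, r_m$ are non-negative integers, since the requirement $|\widetilde{B}_i| = r_i$ demands as much; this matches the later use of the lemma, where the $r_i$ count agents of various subtypes.

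The necessity direction ($\Rightarrow$) is immediate. If disjoint $\widetilde{B}_i \subseteq B_i$ with $|\widetilde{B}_i| = r_i$ exist, then for any $M \subseteq \{1, \ldots, m\}$ the inclusion $\bigcup_{i \in M} \widetilde{B}_i \subseteq \bigcup_{i \in M} B_i$ combined with pairwise disjointness gives
\[
\Bigl|\bigcup_{i \in M} B_i\Bigr| \ge \Bigl|\bigcup_{i \in M} \widetilde{B}_i\Bigr| = \sum_{i \in M} |\widetilde{B}_i| = \sum_{i \in M} r_i.
\]

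For sufficiency ($\Leftarrow$), I form an enlarged family by replicating each $B_i$ as $r_i$ identical copies $B_i^{(1)}, \ldots, B_i^{(r_i)} \subseteq \{1, \ldots, \ell\}$, for a total of $\sum_i r_i$ sets. Applying Hall's marriage theorem to this enlarged family yields a system of distinct representatives, and I take $\widetilde{B}_i$ to be the set of representatives chosen for $B_i^{(1)}, \ldots, B_i^{(r_i)}$. By construction $\widetilde{B}_i \subseteq B_i$ and $|\widetilde{B}_i| = r_i$, and the $\widetilde{B}_i$'s are pairwise disjoint because representatives in an SDR are distinct. It remains to verify Hall's condition on the enlarged family: for any sub-collection obtained by selecting $k_i \in \{0, \ldots, r_i\}$ copies of each $B_i$, let $M = \{i : k_i > 0\}$. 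Then the union of the chosen copies is exactly $\bigcup_{i \in M} B_i$ and the size of the sub-collection is $\sum_i k_i \le \sum_{i \in M} r_i$, so the assumed inequality yields $|\bigcup_{i \in M} B_i| \ge \sum_{i \in M} r_i \ge \sum_i k_i$, as required.

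The argument poses no real obstacle beyond properly identifying the worst case of the enlarged Hall condition, namely $k_i = r_i$ for all $i \in M$, which is precisely what the given condition handles. The only subtlety is the mismatch between the stated real-valued $r_i$ and the integrality needed for $|\widetilde{B}_i| = r_i$; my plan simply reads the hypothesis as implicitly integer, which is the regime in which the lemma is subsequently invoked.
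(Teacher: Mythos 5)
Your proof is correct and follows essentially the same route as the paper's: replicate each $B_i$ into $r_i$ copies, apply classical Hall, and observe that the worst case of the enlarged Hall condition for a sub-collection with support $M$ is $k_i=r_i$ for all $i\in M$, which is exactly the assumed inequality. The paper derives necessity through the replicated instance as well, whereas you prove it directly, and you also flag the implicit integrality of the $r_i$; both are minor stylistic differences, not a different argument.
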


\begin{restatable}{lemma}{lmmblockingconditions}
\label{lmm:blocking-conditions}
If a meta-signal $s = (\actionvector, \signalvector)$ is stable, then there exists a function
$\gamma: \Deviationprofiles_* \to 2^\Actions \times 2^\Agents$ such that for every $\delta$, it holds for $(A', N') = \gamma(\delta)$ that:
\begin{itemize}
\item[1)] 
agents in $N'$ are all of the same subtype, say $(T,a)$;

\item[2)] 
$\delta(T, a, a') > 0$, for all $a' \in A'$;

\item[3)] 
$\rho_\actionvector(T, a) - |N'| < \sum_{a' \in A'} \delta(T, a, a')$; 
and

\item[4)] for every $a' \in A', i \in N'$,
\begin{align}
\label{eq:sm-blocking-profile-uT}
u_T(a, \rho_\actionvector \given p_i) \ge u_T(a', \rho_\actionvector \oplus \delta \given p_i),
\end{align}
where $p_i = \Pro(\cdot \given s_i)$ denotes the posterior induced by $s_i$.
\end{itemize}    
\end{restatable}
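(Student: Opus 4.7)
The plan is to prove the lemma by contraposition: if for some deviation $\delta \in \Deviationprofiles_*$ no pair $(A', N')$ satisfies conditions~(1)--(4), I will construct an explicit realization of $\delta$ under which every deviating agent strictly gains, contradicting the stability of $s$. The key observation is that the negation of condition~(3), ranging over all candidate $A'$, is exactly the hypothesis of \Cref{lmm:hall-variant}.

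First I would fix an arbitrary subtype $(T, a)$ and, for each $a' \in A \setminus \{a\}$, define the willing-to-deviate set $B_{a'} = \{\, i \in T : a_i = a, \ u_T(a, \rho_\actionvector \given p_i) < u_T(a', \rho_\actionvector \oplus \delta \given p_i) \,\}$, where $p_i = \Pro(\cdot \given s_i)$. For any $A' \subseteq \{a' : \delta(T, a, a') > 0\}$, the largest candidate $N'$ meeting conditions (1), (2), and (4) is exactly the complement of $\bigcup_{a' \in A'} B_{a'}$ among the $\rho_\actionvector(T, a)$ agents of subtype $(T, a)$. Under the contradiction hypothesis this $N'$ must violate condition~(3), which rearranges to $|\bigcup_{a' \in A'} B_{a'}| \ge \sum_{a' \in A'} \delta(T, a, a')$. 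Enlarging $A'$ by actions with $\delta(T, a, a') = 0$ adds no binding constraint, so the Hall-type hypothesis of \Cref{lmm:hall-variant} holds for the sets $B_{a'}$ with $r_{a'} = \delta(T, a, a')$.

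Next I would invoke \Cref{lmm:hall-variant} independently for each subtype to obtain disjoint subsets $\widetilde B_{(T,a),a'} \subseteq B_{(T,a),a'}$ of size $\delta(T, a, a')$. Setting $\widetilde N = \bigcup_{(T,a),a'} \widetilde B_{(T,a),a'}$ and instructing each $i \in \widetilde B_{(T,a),a'}$ to switch from $a$ to $a'$ yields a joint deviation involving $\sum_{T,a,a'} \delta(T,a,a') \le d$ agents that realizes $\delta$ on top of $\rho_\actionvector$; by membership in $B_{(T,a),a'}$, every participant strictly prefers the deviation under their own posterior. This contradicts the stability of $s$ via \Cref{eq:stability}, completing the contrapositive.

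The main obstacle is recognising the Hall structure hidden inside condition~(3): once one notes that the failure of condition~(3) for the maximal admissible $N'$ rearranges to exactly Hall's condition on the sets $B_{a'}$, the argument falls out of \Cref{lmm:hall-variant}. A secondary care is that the Hall matching is executed separately per subtype (since the constraints couple only within a subtype), but agents of distinct subtypes are automatically disjoint, so the per-subtype matchings combine into a single joint deviation without further work.
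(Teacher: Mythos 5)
Your proof is correct and uses the same key ingredient (\Cref{lmm:hall-variant}), but you run the argument in contrapositive form, and this is actually a cleaner and more careful presentation of the idea than the paper's direct argument. The paper picks ``an arbitrary subtype $(T,a)$'' with $\sum_{a'}\delta(T,a,a')>0$ and then asserts, ``since $s$ is stable, by definition,'' that no disjoint sets $\widetilde B_{a'}\subseteq B_{a'}$ of sizes $r_{a'}$ exist for that subtype. But when $\delta$ touches several subtypes, the existence of a Hall matching within a single arbitrarily chosen subtype does \emph{not} alone contradict stability: one would still need willing deviators from the other subtypes to realize the full $\delta$ (recall the $B_{a'}$'s are defined relative to $\rho_\actionvector\oplus\delta$, not relative to the restriction of $\delta$ to one subtype). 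Stability only guarantees that the matching fails for \emph{some} subtype, not for every one. Your contrapositive does exactly the right thing: assuming no $(A',N')$ works anywhere forces the Hall hypothesis to hold in \emph{every} subtype, you then invoke \Cref{lmm:hall-variant} once per subtype, and the resulting matched agents live in pairwise-disjoint subtypes so their union realizes all of $\delta$ at once with everybody strictly gaining --- the contradiction. One can restate your argument in the paper's direct form by first observing that if every subtype of $\delta$ admitted a Hall matching then the combined deviation would violate stability, hence \emph{some} subtype $(T,a)$ has no matching, and applying \Cref{lmm:hall-variant} to that subtype; this is the correct version of what the paper seems to intend. The only minor point worth checking in your write-up is that $\delta\in\Deviationprofiles_*\setminus\Deviationprofiles_{\rho_\actionvector}$ is handled trivially (taking $A'=A^+$ and $N'=\emptyset$ satisfies (1)--(4) directly), which your contrapositive implicitly covers since the contradiction hypothesis cannot then hold.
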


Intuitively, for a deviation to be successful, a matching that fulfills this deviation needs to be established between agents of certain subtypes and the actions they are willing to deviate to. 
A stable meta-signal prevents such matchings.
Hence, for every possible deviation $\delta \in D_{\rho_\actionvector}$, the set $N'$ stated in the lemma contains a set of agents of subtype $(T, a)$ who are unwilling to deviate to any of the actions in $A'$ according to \Cref{eq:sm-blocking-profile-uT}.
According to the third condition in the lemma, $N'$ is large enough, so that the remaining agents of this subtype are insufficient for fulfilling $\delta$.
Based on $\gamma$, we define the semi-private blocking profiles as follows.

\begin{definition}[Semi-private blocking profile]
\label{def:sm-blocking-profile}



Let $s = (\actionvector, \signalvector)$ be a stable meta-signal and $\gamma$ be the function in \Cref{lmm:blocking-conditions}.\footnote{If there are multiple such functions, we choose one according to an arbitrary tie-breaking rule. The same applies to \Cref{def:pr-blocking-profile}.}
The {\em semi-private blocking profile} of each $s_i$ is
\[
\beta_i = 
\left\{ 
(\delta, A') \in D_{\rho_\actionvector} \times 2^A \,\middle|\, 
i \in N' \text{ for } (A', N') = \gamma(\delta)
\right\}.
\]
The semi-private blocking profile of $s$ is $\beta = (\beta_i)_{i \in N}$.
\end{definition}


Namely, $\beta_i$ describes, for every blocked deviation $\delta$,
a set $A'$ of actions disliked by agent $i$ (as well as other agents in some $N'$) that results in $\delta$ being blocked.
This is the minimum information that ``witnesses'' the stability of $s$ while ensuring that merging meta-signals with the same semi-private signature (\Cref{def:semi-typed-sig}) does not change the outcome of the policy. 
While the sets $N'$ are not listed in any $\beta_i$, it is implicitly defined through the joint profile $\beta$.
This is key to avoiding the exponential growth of the profile space.

\begin{definition}[Semi-private signature]
\label{def:semi-typed-sig}
The {\em semi-private signature} of a meta-signal $s = (\actionvector, \signalvector)$, denoted $\sigsm(s)$, is the tuple $(\bar{\actionvector}, \beta)$ where $\bar{\actionvector}$ is a representative of $\actionvector$, and $\beta = (\beta_i)_{i \in N}$ is the semi-private blocking profile of $s$.
For each $s_i = (\actionvector, g_i)$, the semi-private signature of $s_i$ is defined as $\sigsm(s_i) = \sigsm_i(s) = (\bar{\actionvector}, \beta_i)$.
\end{definition}

We can now prove a new revelation principle for the semi-private case.
For every $\actionvector \in \Actionvectors$, we let $\smblockingprofiles_\actionvector$ be the set of blocking profiles defined by all possible meta-signals according to \Cref{def:sm-blocking-profile}, when the joint action in the meta-signal is fixed to $\actionvector$ (which also appear in the last two conditions about $\gamma$ in \Cref{lmm:blocking-conditions}).

\begin{restatable}{theorem}{thmRpSemiPrivate}
\label{thm:rp-semi-private}
For any stable semi-private policy $\sigma : \Omega \to \Delta(\Actionvectors \times G)$, there exists a stable semi-private policy $\bar{\sigma} : \Omega \to \Delta(C)$ that yields as much utility for the principal as $\sigma$ does, where 
$C = \{ (\bar{\actionvector}, \beta) : \bar{\actionvector} \in \Representativeactionvectors, \beta \in \smblockingprofiles_{\bar{\actionvector}} \}$.
Moreover, the semi-private signature of each meta-signal $c \in C$ is $c$.
\end{restatable}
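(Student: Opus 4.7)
The plan is to mirror the proof of \Cref{thm:rp-public}, with additional care because the blocking profile $\beta=(\beta_i)_{i\in\Agents}$ now distributes across agents and each agent holds a posterior derived from $s_i$ alone. I would proceed in three steps: first symmetrize so that every supported joint action is a representative; then merge all signals sharing the same semi-private signature into one canonical meta-signal; and finally verify that stability and the signature structure are preserved.

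For symmetrization, for each $s=(\actionvector,\signalvector)\in\supp(\sigma)$ I pick a type-preserving permutation $\pi_s$ of $\Agents$ with $a_{\pi_s(i)}=\bar{a}_i$, where $\bar{\actionvector}$ is the representative of $\rho_\actionvector$. I replace $s$ by $s'=(\bar{\actionvector},\signalvector\circ\pi_s)$. Because agents of the same type share $u_T$ and appear only through the anonymized profile $\rho$ in every utility, this relabelling preserves the principal's expected utility and maps stable signals to stable ones. The resulting policy $\sigma^{(1)}$ uses only joint actions in $\Representativeactionvectors$.

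For merging, for every $c=(\bar{\actionvector},\beta)\in C$ I would set
\[
\bar{\sigma}(c\given\omega) = \sum_{s\in\supp(\sigma^{(1)}):\,\sigsm(s)=c}\sigma^{(1)}(s\given\omega),
\]
and interpret $c$ as the meta-signal in which agent $i$ observes $(\bar{\actionvector},\beta_i)$. Distinct signatures partition $\supp(\sigma^{(1)})$, so $\bar{\sigma}(\cdot\given\omega)$ is a valid distribution, and since every merged signal shares the profile $\rho_{\bar{\actionvector}}$ the principal's utility is unchanged. For stability, the posterior $\bar{p}_i$ under $\bar{\sigma}$ given $(\bar{\actionvector},\beta_i)$ is a convex combination of the posteriors $p_i^s$ under the merged original signals; \Cref{eq:sm-blocking-profile-uT} is linear in the posterior and holds for each $p_i^s$ by \Cref{lmm:blocking-conditions}, so it survives under $\bar{p}_i$. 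The other structural conditions of \Cref{lmm:blocking-conditions} depend only on $\rho_{\bar{\actionvector}}$ and $\beta$ and carry over verbatim. Hence the signature of $c$ equals $c$ and $\bar{\sigma}$ is stable.

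The hard part will be verifying that $\beta$ alone encodes enough information to block every deviation after merging, even though the blocking witnesses were originally specified per-signal by functions $\gamma^s$ that might differ across merged signals. The key is that $\beta$ itself implicitly defines a single $\gamma^c$---via $(A',N')$ with $i\in N'$ iff $(\delta,A')\in\beta_i$---whose validity is inherited from the per-signal witnesses thanks to the linearity of \Cref{eq:sm-blocking-profile-uT}, which is exactly the structure that \Cref{lmm:hall-variant} was designed to supply.
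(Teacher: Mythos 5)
Your overall structure matches the paper's: symmetrize to representative action vectors, merge by semi-private signature, and verify stability via convexity of posteriors together with the linearity of the stability inequalities. The merging step and its justification (the posterior of the merged signal is a convex combination of the merged posteriors, and the conditions of \Cref{lmm:blocking-conditions} split into a structural part depending only on $\rho_{\bar{\actionvector}}$ and $\beta$ plus a linear-in-posterior part, \Cref{eq:sm-blocking-profile-uT}) are correct and are exactly what the paper does. The ``hard part'' you flag at the end is actually a non-issue: once you merge by signature, all merged signals share the \emph{same} $\beta$ and hence the same implicit $\gamma$, so the blocking witnesses do not ``differ across merged signals'' --- your linearity argument is fine, but there was nothing to be worried about there.

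The genuine gap is in your symmetrization step. You replace $s=(\actionvector,\signalvector)$ by $s'=(\bar{\actionvector},\signalvector\circ\pi_s)$. This map need not be injective on $\supp(\sigma)$: if $s^1=(\actionvector^1,\signalvector^1)$ and $s^2=(\actionvector^2,\signalvector^2)$ have the same representative $\bar{\actionvector}$ but $\actionvector^1\neq\actionvector^2$, one can have $\signalvector^1\circ\pi_{s^1}=\signalvector^2\circ\pi_{s^2}$ (most glaringly when the private signals are constant, e.g., uninformative). When that happens you have merged two signals \emph{before} checking that their signatures agree. Because the public part $\actionvector^1\neq\actionvector^2$ is visible to the agents in the original policy, their posteriors conditioned on $s^1_{\pi_{s^1}(i)}$ and $s^2_{\pi_{s^2}(i)}$ can differ, and the merged posterior is a mixture whose blocking profile need not match that of either original. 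Your claim that the relabelling ``maps stable signals to stable ones'' is true only signal-by-signal; it does not cover this collision, so $\sigma^{(1)}$ might fail to be stable, and then the subsequent signature-based merge is not well-founded. The paper avoids this by embedding the original recommendation $\actionvector$ into the new private part ($\tilde{g}_i=(\actionvector,g_{\pi(i)})$), which makes the symmetrization injective and yields, for each $i$, a posterior \emph{identical} to that of $\pi(i)$ under $\sigma$ --- this exact preservation is what licenses the subsequent merge. To repair your proof, either adopt the same tagging trick or otherwise enforce injectivity of $s\mapsto s'$.
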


It can be shown that when $d$, $|\Types|$, and $|A|$ are constants, $\smblockingprofiles_{\bar{\actionvector}}$ grows only polynomially with the size of the problem. 
Similarly to the public case, we obtain a polynomial-size LP formulation (see \Cref{sec:lp-semi}).
This leads to a polynomial-time algorithm for semi-private persuasion.

\if 0
\subsection{Computing Optimal Semi-Private Policies}

Similarly to the public case, we present an LP that computes an optimal semi-private policy.
The variables of the LP are $\sigma(s_{\bar{\actionvector}, \beta}|\omega)$ for $\bar{\actionvector} \in \Representativeactionvectors, \beta \in \smblockingprofiles_{\bar{\actionvector}}, \omega \in \Worlds \}$; they encode a policy with signal space $\{ (\bar{\actionvector}, \beta) : \bar{\actionvector} \in \Representativeactionvectors, \beta \in \smblockingprofiles_{\bar{\actionvector}} \}$, as described in the statement of \Cref{{thm:rp-semi-private}}.
The LP is as follows:
\begin{align}
    \max\quad \sum_{\omega \in \Worlds} \priors(\omega) \sum_{\bar{\actionvector} \in \Representativeactionvectors}  \sum_{\beta \in \smblockingprofiles_{\bar{\actionvector}}} \sigma(s_{\bar{\actionvector}, \beta}|\omega) \cdot \utility_0(\bar{\actionvector}|\omega),
\end{align}
subject to the following constraint
for every $\bar{\actionvector} \in \Representativeactionvectors$, 
$\beta \in \smblockingprofiles_{\bar{\actionvector}}$, $i \in N$, every tuple $(\delta, A') \in \beta_i$, and every $a' \in A'$:
\begin{align}
    \sum_{\omega \in \Worlds} \priors(\omega) \sum_{\beta' \in \smblockingprofiles_{\bar{\actionvector}}: \beta'_i = \beta_i} \sigma(s_{\bar{\actionvector}, \beta'} \given \omega) \cdot \utility_{T}(a', \rho_{\bar{\actionvector}} \oplus \delta \given \omega)  \nonumber \\ 
    \le 
    \sum_{\omega \in \Worlds} \priors(\omega) \sum_{\beta' \in \smblockingprofiles_{\bar{\actionvector}}: \beta'_i = \beta_i} \sigma(s_{\bar{\actionvector}, \beta'} \given \omega) \cdot \utility_{T}(a, \rho_{\bar{\actionvector}} \given \omega).
\label{eq:sm-lp-cons-1}
\end{align}
Additionally, for all $\omega \in \Worlds$, we impose constraints similar to \Cref{eq:sigma-valid-1-public,eq:sigma-valid-2}
to ensure that $\sigma(\cdot \given \omega)$ is a valid distribution.

The formulation is similar to that for the public case. In particular, \Cref{eq:sm-lp-cons-1} ensures that \Cref{eq:sm-blocking-profile-uT} holds so that the semi-private blocking profile of each meta-signal $(\bar{\actionvector}, \beta)$ is exactly $\beta$.



\paragraph{Time Complexity}

By analyzing the size of the semi-private signature space, the polynomial run time of the LP-based algorithm can be established readily.
For general $d$, semi-private persuasion is hard as showed in \Cref{thm:public-hardness}.
\fi


\begin{restatable}{theorem}{thmOptSmPolyTime}
    \label{thm:opt-sm-poly-time}
    For constant $d$, $|\Types|$, and $|A|$, an optimal semi-private policy can be computed in polynomial time.
\end{restatable}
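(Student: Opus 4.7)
The plan is to construct a linear program of polynomial size whose optimum coincides with that of the best stable semi-private policy, and then invoke any polynomial-time LP solver. By \Cref{thm:rp-semi-private}, it suffices to restrict attention to policies $\sigma: \Omega \to \Delta(C)$ with $C = \{(\bar{\actionvector}, \beta) : \bar{\actionvector} \in \Representativeactionvectors,\ \beta \in \smblockingprofiles_{\bar{\actionvector}}\}$, each meta-signal having semi-private signature equal to itself. The LP parallels the public one: variables $\sigma(s_{\bar{\actionvector},\beta} \given \omega)$ for every triple in $\Representativeactionvectors \times \smblockingprofiles_{\bar{\actionvector}} \times \Worlds$; an objective summing $\priors(\omega) \cdot \sigma(s_{\bar{\actionvector},\beta} \given \omega) \cdot \utility_0(\bar{\actionvector} \given \omega)$ in the spirit of \Cref{eq:lp-pub-obj}; distribution constraints $\sum_{\bar{\actionvector},\beta} \sigma(s_{\bar{\actionvector},\beta} \given \omega) = 1$ for every $\omega$; and, for every tuple $(\bar{\actionvector}, \beta, i, \delta, A', a')$ with $(\delta, A') \in \beta_i$ and $a' \in A'$, a stability constraint enforcing \Cref{eq:sm-blocking-profile-uT} on the posterior conditional on agent $i$ observing $(\bar{\actionvector}, g_i)$ with $g_i$ leading to the blocking component $\beta_i$ (obtained by summing $\sigma(s_{\bar{\actionvector},\beta'} \given \omega)$ over $\beta'$ with $\beta'_i = \beta_i$).

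Polynomial run time reduces to bounding the numbers of LP variables and constraints. First, $|\Representativeactionvectors|$ equals the number of action profiles, $\prod_{T \in \Types} \binom{|T| + |A| - 1}{|A| - 1}$, polynomial in $n$ when $|\Types|$ and $|A|$ are constant. Second, $|\Deviationprofiles_*|$ is bounded by $\binom{d + |\Types||A|^2}{|\Types||A|^2}$, a constant under our hypotheses; thus $|\Deviationprofiles_\rho|$ is constant for every $\rho$. I expect the main obstacle to be bounding $|\smblockingprofiles_{\bar{\actionvector}}|$, since a naive count is exponential: even though each $\beta_i$ takes values in the constant-size set $\Deviationprofiles_{\rho_{\bar{\actionvector}}} \times 2^A$, the vector $\beta = (\beta_i)_{i \in N}$ naively admits $(\text{const})^{n}$ combinations. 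The remedy combines \Cref{lmm:blocking-conditions} with type symmetry: the lemma says $\beta$ is induced by a function $\gamma$ sending each $\delta$ to a pair $(A', N')$ in which $N'$ consists entirely of agents of one subtype $(T,a)$, and since agents of a common type enter all utilities interchangeably, we may restrict without loss of optimality to policies invariant under within-type permutations. Under this restriction the data of $\gamma(\delta)$ collapses to the aggregate tuple $(A', T, a, |N'|)$, giving only $2^{|A|} \cdot |\Types| \cdot |A| \cdot (n+1) = O(n)$ choices per $\delta$, and hence $n^{O(1)}$ equivalence classes of semi-private blocking profiles overall.

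With these counts in hand, the LP has polynomially many variables and polynomially many constraints, and is solvable in polynomial time by any standard LP algorithm. Its optimum matches the best principal utility attainable by any stable semi-private policy, by \Cref{thm:rp-semi-private}, completing the proof. The constant-$d$ hypothesis cannot be removed, as shown by \Cref{thm:public-hardness}.
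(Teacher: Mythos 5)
Your overall framework — reduce to the LP from \Cref{sec:lp-semi} via \Cref{thm:rp-semi-private}, then bound the number of variables and constraints — is the paper's approach. The gap is in how you bound $|\smblockingprofiles_{\bar{\actionvector}}|$. You invoke a ``restrict WLOG to policies invariant under within-type permutations'' step and then collapse $\gamma(\delta)$ to the aggregate $(A', T, a, |N'|)$. That WLOG is not established anywhere, and it is not obviously true: a uniform mixture of the within-type-permuted copies of a stable semi-private policy is a new policy whose posteriors differ from those of the copies, so the mixture's stability does not follow from the copies' stability (the constraints defining stability are on posteriors, which are ratios, not linear in the policy). The machinery the paper builds to justify a symmetry-based reduction (lottery policies, \Cref{lmm:tilde-lottery-stable}) is developed only for the \emph{private} case, precisely because it takes real work; nothing analogous is claimed for semi-private persuasion, and the appendix LP for semi-private persuasion keeps the per-agent components $\beta_i$ rather than aggregates.

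The paper's count avoids symmetry entirely and is more elementary: by condition~3 of \Cref{lmm:blocking-conditions}, $\rho_{\bar{\actionvector}}(T,a) - |N'| < \sum_{a' \in A'} \delta(T,a,a') \le d$, so $N'$ is a subset of the $\rho_{\bar{\actionvector}}(T,a)$ agents of subtype $(T,a)$ whose complement (within that subtype) has size at most $d-1$. There are $\sum_{k=0}^{d-1}\binom{\rho_{\bar{\actionvector}}(T,a)}{k} = O(n^{d-1})$ such sets, which is polynomial for constant $d$. Combined with constantly many choices of $(T,a)$ and $A'$ and the constant size of $\Deviationprofiles_{\rho_{\bar{\actionvector}}}$, this gives $n^{O(1)}$ possible $\gamma$ functions and hence $|\smblockingprofiles_{\bar{\actionvector}}| = n^{O(1)}$, without any appeal to symmetry. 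I suggest you replace your second paragraph's symmetry argument with this direct count; you already cite the lemma, you just need to use its third condition rather than an extra WLOG reduction.
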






\section{Private Persuasion}
\label{sc:private}

In private persuasion, even the action recommendations are sent privately. This means that the agents maintain probabilistic beliefs about both the world and the actions of the other agents.
We follow the same routine as the previous sections, deriving a revelation principle for the private case and then an LP formulation based on the result.
A crucial difference in the private case is that it is no longer w.l.o.g. to restrict the policy to representative action vectors (\Cref{prp:prv-representative-not-opt}).\footnote{This also implies that private persuasion can achieve strictly higher utilities than semi-private and public persuasion, since in the latter two it is w.l.o.g. to consider representative action vectors.}
To solve the problem requires a novel concept called the {\em lottery policy}.

\begin{restatable}{proposition}{thmPrvRepresentativeNotOpt}
    \label{prp:prv-representative-not-opt}
    An optimal private policy supported only on representative action vectors need not exist.
\end{restatable}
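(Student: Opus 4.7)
The plan is to prove the proposition by exhibiting an explicit instance in which every stable representative-only private policy is strictly dominated in principal utility by a non-representative ``lottery'' policy. I would take $N=\{1,2\}$ with a single type $T=\{1,2\}$, worlds $\Omega=\{\omega_1,\omega_2\}$ with skewed prior $\priors(\omega_1)=0.9$ and $\priors(\omega_2)=0.1$, action set $A=\{a,b\}$, and $\Maxdeviators=1$. The three profiles are $\rho_{aa},\rho_{ab},\rho_{bb}$, whose fixed representatives are $(a,a),(a,b),(b,b)$, so that $(b,a)$ is the unique non-representative joint action. The principal's utility rewards only $u_0(\rho_{ab}\given\omega_1)=u_0(\rho_{aa}\given\omega_2)=1$, putting the unconstrained optimum at most $1$. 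The type-$T$ utility would be chosen to make ``playing $b$ under the mixed profile $\rho_{ab}$'' strongly preferred to ``playing $a$ under the homogeneous profile $\rho_{aa}$'' in both worlds, e.g.\ $u_T(a,\rho_{ab}\given\omega_1)=10$, $u_T(b,\rho_{ab}\given\omega_1)=5$, $u_T(b,\rho_{ab}\given\omega_2)=10$, with all other entries zero.

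The first step shows that every stable representative-only policy has principal utility at most $0.9$. Profile $\rho_{aa}$ can only be realized through the sole representative $(a,a)$, and under any representative-only policy agent $2$ observes signal $a$ exclusively through this vector. Hence any positive contribution from $\omega_2$ would force agent $2$ to believe that, upon recommendation $a$, the profile is $\rho_{aa}$ with probability one; by the chosen utilities her unilateral deviation to $b$ is then strictly profitable in both worlds. Consequently no stable representative-only policy places mass on $(a,a)$, and the total utility is capped by $0.9\cdot u_0(\rho_{ab}\given\omega_1)=0.9$, a bound attained e.g.\ by $\sigma((a,b)\given\omega_1)=\sigma((b,b)\given\omega_2)=1$.

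The second step constructs a non-representative policy achieving utility $1$: let $\sigma(\cdot\given\omega_1)$ be uniform on $\{(a,b),(b,a)\}$ and $\sigma((a,a)\given\omega_2)=1$. The induced profile is $\rho_{ab}$ in $\omega_1$ and $\rho_{aa}$ in $\omega_2$ almost surely, so the principal earns $1$. The only nontrivial stability check is each agent's IC upon receiving recommendation $a$: her Bayes posterior now places weight $9/11$ on $\omega_1$, where deviation incurs a loss of $10$, against weight $2/11$ on $\omega_2$, where deviation gains $10$, yielding a strictly negative expected deviation value. All other recommendations fully reveal $\omega_1$, so stability there is immediate from the utility table. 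The strict gap between $1$ and $0.9$ establishes the proposition.

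The main obstacle is calibrating the prior and the utility entries simultaneously: the type-$T$ utilities must make the representative recommendation of $(a,a)$ self-destabilizing in both worlds (otherwise the representative-only program already reaches $1$), while the prior on $\omega_1$ must be heavy enough that the lottery-induced posterior on recommendation $a$ reverses the balance in favor of the recommended action. Once the instance is pinned down, the remaining verification reduces to routine Bayesian posterior and expected-utility calculations for the four (agent, signal) pairs.
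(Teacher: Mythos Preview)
Your proof is correct and follows essentially the same mechanism as the paper's: under a representative-only policy, one agent can deterministically infer the joint profile from her private action recommendation (since only one representative assigns her that action), and the utilities are chosen so this agent profitably deviates in every world; symmetric randomization over non-representative vectors restores enough uncertainty to block the deviation. The paper's own construction is more minimal—a single world and a pure coordination game, so no Bayesian updating over states is needed—whereas your two-world instance with a skewed prior adds machinery that is not required but still works.
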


Before we define lottery policies, we first extend the notions of blocking profile and signature to the private case. 
We can replicate \Cref{lmm:blocking-conditions} and prove the following lemma, with slight changes in the domain of $\gamma$ and the stability constraint in the last condition. In the private case, the agents' expected utilities are functions of their beliefs over all possible joint actions. 

\begin{restatable}{lemma}{lmmblockingconditionsprivate}
\label{lmm:blocking-conditions-private}
If a meta-signal $s = (\actionvector, \signalvector)$ is stable, then there exists a function
$\gamma: \Deviationprofiles_* \to 2^\Actions \times 2^\Agents$ such that for every $\delta$, it holds for $(A', N') = \gamma(\delta)$ that:
\begin{itemize}
\item[1)] 
agents in $N'$ are all of the same subtype, say $(T,a)$;

\item[2)] 
$\delta(T, a, a') > 0$, for all $a' \in A'$;

\item[3)] 
$\rho_\actionvector(T, a) - |N'| < \sum_{a' \in A'} \delta(T, a, a')$; 
and

\item[4)] for every $a' \in A', i \in N'$,
\begin{align*}
\label{eq:pr-blocking-profile-uT}
& \sum_{\omega \in \Worlds}\
\sum_{\tilde{\actionvector} \in \Actionvectors: \tilde{a}_i = a} 
\Pro(\tilde{\actionvector}, \omega \given s_i) \cdot
u_T(a, \rho_{\tilde{\actionvector}} \given \omega)
\ge \nonumber \\
&\qquad
\sum_{\omega \in \Worlds}\
\sum_{\tilde{\actionvector} \in \Actionvectors: \tilde{a}_i = a} 
\Pro(\tilde{\actionvector}, \omega \given s_i) \cdot
u_T(a', \rho_{\tilde{\actionvector}} \oplus \delta \given \omega).
\end{align*}  
\end{itemize}
\end{restatable}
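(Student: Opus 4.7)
The plan is to follow the same skeleton as the proof of \Cref{lmm:blocking-conditions}, but with the agents' willingness to deviate now measured against their full private posteriors over joint actions. Given a stable meta-signal $s = (\actionvector, \signalvector)$ and an arbitrary $\delta \in \Deviationprofiles_*$, I construct $\gamma(\delta) = (A', N')$ by applying \Cref{lmm:hall-variant} subtype-by-subtype.

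The key observation is that any realization of $\delta$ as an actual deviating pair $(\hat N, \actionvector')$ decomposes independently over subtypes: for every $(T, a)$ with $\sum_{a'} \delta(T, a, a') > 0$ we must pick $\delta(T, a, a')$ agents of subtype $(T, a)$ to assign to each target action $a'$, using only agents who strictly gain from their assigned switch. Accordingly, for each such subtype and each $a'$ with $\delta(T, a, a') > 0$, I define
\[
B_{a'} = \bigl\{ i \in T : a_i = a,\ i \text{ strictly prefers switching to } a' \bigr\},
\]
where the strict preference is the strict version of the inequality in condition~4, evaluated under $i$'s private posterior $\Pro(\tilde{\actionvector}, \omega \given s_i)$ over $(\tilde{\actionvector}, \omega)$. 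By \Cref{lmm:hall-variant}, subtype $(T, a)$ admits a valid strictly-benefiting assignment if and only if $\bigl|\bigcup_{a' \in A'} B_{a'}\bigr| \ge \sum_{a' \in A'} \delta(T, a, a')$ for every $A' \subseteq \{a' : \delta(T, a, a') > 0\}$. Hence, if Hall's condition holds for every participating subtype, these per-subtype assignments can be combined into a pair $(\hat N, \actionvector')$ in which every $i \in \hat N$ strictly benefits, contradicting the stability of $s$.

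Consequently, for feasible $\delta \in \Deviationprofiles_{\rho_\actionvector}$ some subtype must fail Hall's condition. For infeasible $\delta$, the subtype with $\sum_{a'} \delta(T, a, a') > \rho_\actionvector(T, a)$ violates Hall trivially, since $\bigcup_{a'} B_{a'}$ lies inside a set of size $\rho_\actionvector(T, a)$. In either case, I pick a failing subtype $(T, a)$ and a violating $A'$, then set $N' = \{i \in T : a_i = a\} \setminus \bigcup_{a' \in A'} B_{a'}$ and take $\gamma(\delta) = (A', N')$. Conditions~1 and~2 are immediate. Condition~3 follows from $\rho_\actionvector(T, a) - |N'| = \bigl|\bigcup_{a' \in A'} B_{a'}\bigr| < \sum_{a' \in A'} \delta(T, a, a')$, using the Hall violation. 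Condition~4 holds because every $i \in N'$ lies outside each $B_{a'}$ and therefore does not strictly gain from switching to any $a' \in A'$, which is exactly the weak inequality stated.

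The main subtlety compared with the semi-private case is that ``strictly gaining'' is now determined by $i$'s joint posterior over worlds \emph{and} others' actions rather than over worlds alone, because the joint action is not publicly revealed. I expect this to be the only nontrivial adjustment: once the sets $B_{a'}$ are defined with respect to this richer posterior, the subtype-wise decomposition of $\delta$ still holds, since whether $i$ benefits depends only on $s_i$ and on the anonymized $\delta$, not on which specific co-deviators are selected from other subtypes.
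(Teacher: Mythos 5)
Your proof is correct and follows the same skeleton as the paper's proof of \Cref{lmm:blocking-conditions}, to which the paper formally defers for the private case: define per-subtype the sets $B_{a'}$ of strict gainers, invoke \Cref{lmm:hall-variant}, and take $N'$ to be the complement of $\bigcup_{a' \in A'} B_{a'}$ within a failing subtype. Your write-up is in fact tighter than the paper's own in two places. First, the paper's proof picks an \emph{arbitrary} participating subtype and asserts that stability rules out a valid assignment for it; when $\delta$ spans several subtypes this does not follow directly, since a failed joint deviation only forces \emph{some} participating subtype to violate Hall. Your subtype-by-subtype decomposition argument---that Hall holding at every participating subtype would assemble into a concrete deviating pair $(\hat N, \actionvector')$ in which every deviator strictly gains, contradicting stability---supplies the existential claim that is actually needed. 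Second, $\gamma$'s stated domain is $\Deviationprofiles_*$, while the paper's proof only treats $\delta \in \Deviationprofiles_{\rho_\actionvector}$; you cover the remaining $\delta$'s via the trivial Hall violation at the over-demanded subtype. The adjustment you flag---measuring strict gain via the joint posterior over $(\tilde{\actionvector}, \omega)$---is indeed the only change needed relative to the semi-private proof, and your observation that an agent's gain depends only on $s_i$ and the anonymized $\delta$, not on the identities of co-deviators, is exactly what keeps the subtype-wise Hall tests independent.
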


This leads to the definitions of the private blocking profile and the private signature.

\begin{definition}[Private blocking profile]
\label{def:pr-blocking-profile}



Let $s = (\actionvector, \signalvector)$ be a stable meta-signal.
The {\em private blocking profile} of each $s_i$ is 
\[
\beta_i = 
\left\{ 
(\delta, A') \in D_* \times 2^A \,\middle|\, 
i \in N' \text{ for } (A', N') = \gamma(\delta)
\right\},
\]
where $\gamma$ is the function satisfying the conditions in \Cref{lmm:blocking-conditions-private}.
\end{definition}


\begin{definition}[Private signature]
\label{def:private-typed-sig}
The {\em private signature} of a meta-signal $s = (\actionvector, \signalvector)$, denoted $\sigpv(s)$, is $(\actionvector, \beta)$ where $\beta = (\beta_i)_{i \in N}$ is the private blocking profile of $s$.
For each $s_i = (a_i, g_i)$, the private signature of $s_i$ is defined as $\sigpv(s_i) = \sigpv_i(s) = (\actionvector_i, \beta_i)$.
\end{definition}


\subsection{Lottery Policy and Revelation Principle}


We now introduce lottery policies. 
Lottery policies utilize a symmetry in the agents' roles to avoid explicit representation of the original policies.
Given a private policy $\sigma$, a lottery policy $\lotterypolicy(\sigma)$ ``lotterizes'' $\sigma$ by uniformly randomizing the signals of $\sigma$ among agents of the same type.

\begin{definition}[Lottery policy]
Let $\Permutations$ be the set of all permutations $m: N \to N$ such that $m(i) \in T$ if and only if $i \in T$.
When $\sigma$ signals each meta-signal $(\actionvector, \signalvector)$, the lottery policy $\lotterypolicy(\sigma)$ draws $m \sim \mathrm{Uniform}(\Permutations)$ and signals $(\actionvector', \signalvector')$, where $a'_i = a_{m(i)}, g'_i=g_{m(i)}$.
\end{definition}

For example, consider an instance with two types of agents: $T_1 = \{1,2\}$ and $T_2 = \{3\}$; and a private policy $\sigma$ such that:
\begin{align*}
&\sigma((a_1, a_2, a_3), (g_1, g_1, g_1) \given \omega_1) = 0.4 \\
\text{and}\quad
&\sigma((a_1, a_2, a_1), (g_1, g_2, g_2) \given \omega_1) = 0.6. 
\end{align*}
The corresponding lottery policy gives:
\begin{align*}
&\lambda(\sigma)((a_1, a_2, a_3), (g_1, g_1, g_1) \given \omega_1) = 0.2, \\
&\lambda(\sigma)((a_2, a_1, a_3), (g_1, g_1, g_1) \given \omega_1) = 0.2, \\
&\lambda(\sigma)((a_1, a_2, a_1), (g_1, g_2, g_2) \given \omega_1) = 0.3, \\
\text{and}\quad
&\lambda(\sigma)((a_2, a_1, a_1), (g_2, g_1, g_2) \given \omega_1) = 0.3. 
\end{align*}


Using lottery policies, we can derive a concise characterization of optimal private policies, as stated in \Cref{thm:rp-private}. 
We let $\pvblockingprofiles_\actionvector$ be the set of blocking profiles induced by all possible meta-signals according to \Cref{def:private-typed-sig}.

\begin{restatable}{theorem}{thmRpPrivate}
    \label{thm:rp-private}
     There exists a private policy $\sigma: \Worlds \rightarrow \Delta(C)$ where $C = \{ (\bar{\actionvector}, \beta) : \bar{\actionvector} \in \Representativeactionvectors, \beta \in \pvblockingprofiles_{\bar{\actionvector}} \}$, such that $\lotterypolicy(\sigma)$ is an optimal private policy. Moreover, the signature of each meta-signal $c \in C$ is exactly the $c$.
\end{restatable}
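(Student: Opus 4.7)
The plan is to follow the same three-step blueprint as in the public and semi-private cases---symmetrize, merge by signature, then restrict the signal space---but with the novel third step mediated by the lottery operator $\lotterypolicy$ so as to circumvent \Cref{prp:prv-representative-not-opt}.

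First I would start with an optimal stable private policy $\sigma^*$ and establish a symmetrization step: $\pi := \lotterypolicy(\sigma^*)$ is stable and yields the same expected utility for the principal. The principal's utility is invariant because $u_0(\actionvector|\omega)$ depends only on the anonymous action profile $\rho_\actionvector$, which is unchanged under any type-respecting permutation $m \in \Permutations$. For stability, each meta-signal in $\supp(\pi)$ is a relabelling $(\actionvector^m, \signalvector^m)$ of some meta-signal in $\supp(\sigma^*)$, and since $u_i(\actionvector|\omega) = u_T(a_i, \rho_\actionvector|\omega)$ for $i \in T$, agent $i$'s stability inequality in $\pi$ coincides with agent $m^{-1}(i)$'s stability inequality in $\sigma^*$. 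Hence $\pi$ is stable, and moreover permutation-invariant: $\pi(s^m|\omega) = \pi(s|\omega)$ for every $m \in \Permutations$.

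Second, I would merge meta-signals that share the same private signature. \Cref{lmm:blocking-conditions-private} together with the tie-breaking rule behind \Cref{def:pr-blocking-profile} assigns each $s \in \supp(\pi)$ a unique signature $\sigpv(s) = (\actionvector_s, \beta_s)$; form $\pi'$ by grouping $\supp(\pi)$ by signature and summing probabilities within each group. The principal's utility is preserved because $u_0(\cdot|\omega)$ depends only on $\actionvector$, which is constant within each group. Stability is preserved because the blocking inequality in condition~4 of \Cref{lmm:blocking-conditions-private} is linear in the unnormalized joint posterior $\mu(\omega)\cdot\pi(s_i,\tilde{\actionvector}|\omega)$ appearing in Bayes' rule, so merging signals that share $\beta_i$ amounts to adding non-negative multiples of $\ge$-inequalities already pointing in the required direction. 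Since the family of signatures is closed under $\Permutations$, $\pi'$ inherits permutation-invariance from $\pi$.

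Finally, I would convert $\pi'$ into a policy $\sigma: \Omega \to \Delta(C)$ with $C = \{(\bar{\actionvector},\beta) : \bar{\actionvector}\in\Representativeactionvectors,\ \beta\in\pvblockingprofiles_{\bar{\actionvector}}\}$ by transporting orbit mass onto representatives: for each orbit $O$ of $\Permutations$ acting on $\supp(\pi')$, let $(\bar{\actionvector}_O,\beta_O)$ be the unique element with $\bar{\actionvector}_O\in\Representativeactionvectors$ and set $\sigma(\bar{\actionvector}_O,\beta_O|\omega) := |O|\cdot\pi'(\bar{\actionvector}_O,\beta_O|\omega)$. Summing over orbits recovers a probability distribution, and applying $\lotterypolicy$ spreads each $\sigma$-atom uniformly over its orbit with the correct stabilizer-dependent multiplicity, so $\lotterypolicy(\sigma) = \pi'$, and hence $\lotterypolicy(\sigma)$ is optimal and stable. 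The signature claim---that each $c\in C$ used by $\sigma$ has signature $c$ under $\lotterypolicy(\sigma)$---then follows because the lottery redistribution preserves each agent's Bayesian posterior on $(\tilde{\actionvector},\omega)$ up to a relabelling of the role an agent occupies. The main obstacle, I expect, lies precisely in this last step: verifying $\lotterypolicy(\sigma) = \pi'$ when orbits have non-trivial stabilizers (e.g.\ two same-type agents receive identical $(a,g)$ pairs so a transposition fixes the meta-signal) and checking that each $c\in C$ retains its signature under the template $\sigma$ itself. This is where lottery policies earn their keep: they convert the permutation-invariance of the optimum---free by type symmetry---into the ability to describe an optimal policy through a template supported only on representatives, exactly the move forbidden by \Cref{prp:prv-representative-not-opt} if one refused to lotterize.
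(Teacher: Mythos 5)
Your high-level blueprint (symmetrize via the lottery, merge by signature, transport orbit mass to representatives) is in the same spirit as the paper's, and your step 3 --- the orbit-stabilizer argument showing $\lotterypolicy(\sigma)=\pi'$ given permutation-invariance of $\pi'$ --- is essentially correct and a nice way to see why the representative template suffices. However, step 1 has a genuine gap: you assert that $\pi := \lotterypolicy(\sigma^*)$ is stable whenever $\sigma^*$ is, but the justification offered (``agent $i$'s stability inequality in $\pi$ coincides with agent $m^{-1}(i)$'s stability inequality in $\sigma^*$'') is a per-meta-signal, per-permutation statement, whereas stability in the private case is a condition on the agent's \emph{posterior given only $s_i$}, which under $\lotterypolicy(\sigma^*)$ marginalizes over every $(s,m)$ compatible with $s_i$. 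That posterior is a convex mixture of the posteriors of the various same-type agents who could receive $(a_i,g_i)$ under $\sigma^*$, and these original posteriors need not induce the same blocking profile. Since stability is a disjunctive ("for every deviation, \emph{some} agent blocks") rather than a convex condition, a mixture of posteriors, some of which block a deviation and some of which do not, can fail to block it --- so $\lotterypolicy(\sigma^*)$ need not be stable. This is exactly the obstruction the paper circumvents: it first applies a \emph{role-revealing} lottery $\tilde{\lotterypolicy}$ (which appends $m(i)$ to each agent's private signal), proves in \Cref{lmm:tilde-lottery-stable} that this operator preserves each agent's posterior over $(\rho,\omega)$ and hence preserves stability, then merges by signature, and only afterwards invokes a commutation identity $b(\tilde{\lotterypolicy}(\sigma))=\lotterypolicy(b(\sigma))$ (\Cref{lmm:before-after-lottery}) to land back on the plain $\lotterypolicy$ demanded by the theorem. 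Without this detour (or an equivalent argument showing when the plain $\lotterypolicy$ preserves stability), your step 1 is unjustified and your step 2, which requires the input policy to be stable in order for the private blocking profile / signature to even be well-defined, inherits the gap.

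A secondary point: your step 2 needs to be careful about \emph{whose} signatures are being merged. The signature of a meta-signal under $\pi$ is computed from $\pi$'s posteriors, not from $\sigma^*$'s. If $\pi$ is indeed stable (as the role-revealing construction guarantees), then merging by $\pi$'s signatures is fine and your linearity argument goes through; but the construction only makes sense once stability is secured. With the $\tilde{\lotterypolicy}$-plus-commutation scaffolding supplied, the rest of your argument (the orbit/stabilizer transport and the final signature claim) would close as you intend.
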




Similarly to the semi-private case, when $d$, $|\Types|$, and $|A|$ are constants, $\pvblockingprofiles_{\bar{\actionvector}}$ grows only polynomially with the size of the problem, so we can derive a polynomial-size LP formulation, which gives a polynomial-time algorithm for private persuasion.

\begin{restatable}{theorem}{thmOptPrPolyTime}
    \label{thm:opt-pr-poly-time}
    For constant $d$, $|\Types|$, and $|A|$, an optimal private policy can be computed in polynomial time.
\end{restatable}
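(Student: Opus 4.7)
The plan is to leverage \Cref{thm:rp-private} to reduce the problem to solving a polynomial-size linear program, mirroring the LPs used in the public and semi-private cases but with the inputs to the stability constraints reshaped by the lotterization. By \Cref{thm:rp-private}, it suffices to search over private policies $\sigma : \Worlds \to \Delta(C)$ with $C = \{ (\bar{\actionvector}, \beta) : \bar{\actionvector} \in \Representativeactionvectors, \beta \in \pvblockingprofiles_{\bar{\actionvector}}\}$ and then apply $\lotterypolicy(\cdot)$. The LP variables will be $\sigma(s_{\bar{\actionvector},\beta} \given \omega)$ for each $(\bar{\actionvector},\beta) \in C$ and $\omega \in \Worlds$.

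First I would bound $|C|$. The number of representatives $|\Representativeactionvectors|$ equals the number of action profiles, which is $O(n^{|\Types|\cdot|A|})$, polynomial in $n$ when $|\Types|$ and $|A|$ are constants. For the blocking profiles, the set $\Deviationprofiles_*$ has size at most $(|\Types|\cdot|A|^2+1)^d$, constant in the regime considered; each $\beta_i \subseteq \Deviationprofiles_* \times 2^A$ therefore takes a constant number of values. Moreover, the lotterization forces $\beta_i$ to depend only on the subtype $(T, a_i)$ of agent $i$, so $\beta$ reduces to a function from the $|\Types|\cdot|A|$ subtypes to a constant-size family. Consequently $|\pvblockingprofiles_{\bar{\actionvector}}|$ is constant and $|C|$ is polynomial, giving polynomially many LP variables.

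Next I would write the LP. The objective maximizes
\[
\sum_{\omega \in \Worlds} \priors(\omega) \sum_{(\bar{\actionvector},\beta) \in C} \sigma(s_{\bar{\actionvector},\beta} \given \omega)\cdot \utility_0(\bar{\actionvector}\given\omega),
\]
subject to distributional constraints mirroring \Cref{eq:sigma-valid-1-public,eq:sigma-valid-2}. Stability is enforced by condition~(4) of \Cref{lmm:blocking-conditions-private}: for each $(\bar{\actionvector},\beta) \in C$, each subtype $(T,a)$, each $(\delta,A') \in \beta^{T,a}$, and each $a' \in A'$, I impose a linear inequality asserting that the difference of expected utilities from following versus deviating is non-positive. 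The key point is that condition~(4) involves agent $i$'s posterior over joint actions, not merely over worlds, so the constraint naively sums over exponentially many $\tilde{\actionvector}$. Under $\lotterypolicy(\sigma)$, however, I can re-express the joint probability of agent $i$ receiving signal $(a, g_i)$ and the realized joint profile being $\rho_{\tilde{\actionvector}}$ as an average over the permutations $m \in \Permutations$ that map $i$ to an agent assigned action $a$ in the original meta-signal; by symmetry this collapses to a sum over $(\bar{\actionvector},\beta,\omega)$ with coefficients depending only on $\rho_{\bar{\actionvector}}$, $\delta$, $(T,a)$, $a'$, $\omega$, and the utility functions. The resulting inequality is linear in $\sigma$ and has polynomially many terms.

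The main obstacle is carrying out this last reduction cleanly: expressing agent $i$'s posterior under the lottery policy as a linear functional of $\sigma$ without blowing up the representation. Once it is verified that the symmetrization leaves the action-profile distribution induced on agent $i$ invariant, and that a representative agent of each subtype suffices (so that at most $|\Types|\cdot|A|$ stability constraints are generated per signature), the count of constraints is polynomial, and linear programming then solves the problem in polynomial time. Combining this with \Cref{thm:rp-private} yields an optimal private policy after one application of $\lotterypolicy(\cdot)$, proving \Cref{thm:opt-pr-poly-time}.
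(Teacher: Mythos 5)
Your overall strategy (appeal to \Cref{thm:rp-private}, then set up a polynomial-size LP whose variables are $\sigma(s_{\bar{\actionvector},\beta}\given\omega)$, with constraints mirroring condition~(4) of \Cref{lmm:blocking-conditions-private}) matches the paper's approach. However, there is a genuine gap in your counting of the blocking-profile space, which is the step that actually requires an argument.

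You assert that ``lotterization forces $\beta_i$ to depend only on the subtype $(T,a_i)$,'' from which you conclude $|\pvblockingprofiles_{\bar{\actionvector}}|$ is constant. This is false. The LP variables encode $\sigma$, not $\lotterypolicy(\sigma)$, and in $\sigma$ different agents of the same subtype may receive different private signals $g_i$, have different posteriors, and thus have different components $\beta_i$. More to the point, the construction of the private blocking profile (\Cref{def:pr-blocking-profile}) goes through a function $\gamma:\Deviationprofiles_*\to 2^A\times 2^N$, and $\gamma(\delta)=(A',N')$ deliberately picks out a \emph{proper subset} $N'$ of the agents of a given subtype; so $\beta_i$ and $\beta_j$ can differ for agents $i,j$ of the same subtype depending on which of them is in $N'$. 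Once you drop your incorrect symmetry claim, the naive count of $\beta=(\beta_i)_{i\in N}$ is $(\text{const})^n$, i.e.\ exponential, and your argument gives no polynomial bound at all.

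The argument the paper actually makes (in the proof of \Cref{thm:opt-sm-poly-time}, which it invokes) is structural: the number of admissible $\gamma$ functions is what must be bounded, and it is polynomial because for each of the constantly many $\delta\in D_*$ there are constantly many choices of $A'$, and the third condition of \Cref{lmm:blocking-conditions-private} forces $|N'| > \rho_\actionvector(T,a) - d$, so $N'$ is obtained by choosing a subtype and then excluding at most $d$ of its agents. That gives $O\bigl(|\Types|\cdot|A|\cdot n^d\bigr)$ choices of $N'$ per $\delta$, hence a polynomial number of $\gamma$ functions and therefore a polynomial $|\pvblockingprofiles_{\bar{\actionvector}}|$. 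Your proof needs this counting argument; the ``depends only on subtype'' shortcut does not hold. (A related minor consequence: the LP stability constraints are indexed by agents $i \in N$, not just by subtypes, so there are $O(n)$ rather than $O(|\Types|\cdot|A|)$ constraints per signature, but this remains polynomial.) Your sketch of how the lottery collapses the joint-action posterior into a sum over $(\bar{\actionvector},\beta,\omega)$ with profile-only-dependent coefficients is on the right track and is formalized in the paper via \Cref{lmm:tilde-lottery-stable} and the summation over $j \in T$ in \Cref{eq:pr-lp-cs-1}.
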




\section{Acknowledgments}

This research has been partially supported by the Israel Science
Foundation under grant  2544/24 and the Israel Ministry of  Innovation, Science \& Technology grant  1001818511.

\section{Conclusion}

We studied the problem of Bayesian persuasion with externalities. We showed that when multiple agents can coordinate their deviation the classical revelation principle does not hold, and presented alternative characterizations of optimal policies for public, private, and semi-private persuasion.  The concept of agent types is introduced as a succinct representation of the problem. We presented polynomial time algorithms when only a constant number of agents can jointly deviate, and we proved that the problem is hard otherwise.


\bibliography{refs}

\clearpage







\appendix

\setcounter{secnumdepth}{2}

\section{Omitted Formulations}

\subsection{LP Formulation for Semi-Private Persuasion}

\label{sec:lp-semi}

The following LP computes an optimal semi-private policy.
The variables of the LP are $\sigma(s_{\bar{\actionvector}, \beta}|\omega)$ for $\bar{\actionvector} \in \Representativeactionvectors, \beta \in \smblockingprofiles_{\bar{\actionvector}}, \omega \in \Worlds \}$; they encode a policy with signal space $\{ (\bar{\actionvector}, \beta) : \bar{\actionvector} \in \Representativeactionvectors, \beta \in \smblockingprofiles_{\bar{\actionvector}} \}$, as described in the statement of \Cref{{thm:rp-semi-private}}.
The LP is as follows:
\begin{align}
    \max\quad \sum_{\omega \in \Worlds} \priors(\omega) \sum_{\bar{\actionvector} \in \Representativeactionvectors}  \sum_{\beta \in \smblockingprofiles_{\bar{\actionvector}}} \sigma(s_{\bar{\actionvector}, \beta}|\omega) \cdot \utility_0(\bar{\actionvector}|\omega),
\end{align}
subject to the following constraint
for every $\bar{\actionvector} \in \Representativeactionvectors$, 
$\beta \in \smblockingprofiles_{\bar{\actionvector}}$, $i \in N$, every tuple $(\delta, A') \in \beta_i$, and every $a' \in A'$:
\begin{align}
    \sum_{\omega \in \Worlds} \priors(\omega) \sum_{\beta' \in \smblockingprofiles_{\bar{\actionvector}}: \beta'_i = \beta_i} \sigma(s_{\bar{\actionvector}, \beta'} \given \omega) \cdot \utility_{T}(a', \rho_{\bar{\actionvector}} \oplus \delta \given \omega)  \nonumber \\ 
    \le 
    \sum_{\omega \in \Worlds} \priors(\omega) \sum_{\beta' \in \smblockingprofiles_{\bar{\actionvector}}: \beta'_i = \beta_i} \sigma(s_{\bar{\actionvector}, \beta'} \given \omega) \cdot \utility_{T}(a, \rho_{\bar{\actionvector}} \given \omega).
\label{eq:sm-lp-cons-1}
\end{align}
Additionally, for all $\omega \in \Worlds$, we impose constraints similar to \Cref{eq:sigma-valid-1-public,eq:sigma-valid-2}
to ensure that $\sigma(\cdot \given \omega)$ is a valid distribution.

The formulation is similar to that for the public case. In particular, \Cref{eq:sm-lp-cons-1} ensures that \Cref{eq:sm-blocking-profile-uT} holds so that the semi-private blocking profile of each meta-signal $(\bar{\actionvector}, \beta)$ is exactly $\beta$.




\subsection{LP Formulation for Private Persuasion}

The following LP computes an optimal private policy. 
The variables of the LP are $\sigma(s_{\bar{\actionvector}, \beta}|\omega)$, for $\bar{\actionvector} \in \Representativeactionvectors, \beta \in \pvblockingprofiles_{\bar{\actionvector}}, \omega \in \Worlds$, which encode a policy $\sigma$ with signals in $\{ (\bar{\actionvector}, \beta) : \bar{\actionvector} \in \Representativeactionvectors, \beta \in \pvblockingprofiles_{\bar{\actionvector}} \}$. 
\begin{align}
    \max \sum_{\omega \in \Worlds} \priors(\omega) \sum_{\bar{\actionvector} \in \Representativeactionvectors}  \sum_{\beta \in \pvblockingprofiles_{\bar{\actionvector}}} \sigma(s_{\bar{\actionvector}, \beta}|\omega) \cdot \utility_0(\bar{\actionvector}|\omega),
\end{align}
subject to the following constraint
for every $\bar{\actionvector} \in \Representativeactionvectors$, 
$\beta \in \pvblockingprofiles_{\bar{\actionvector}}$, $i \in N$ of type $T$, every tuple $(\delta, A') \in \beta_i$, and every $a' \in A'$:
\begin{align}
\label{eq:pr-lp-cs-1}
&\sum_{\omega \in \Worlds} \priors(\omega) 
\!\!\!\!
\sum_{\substack{\bar{\actionvector}' \in \Representativeactionvectors, j \in T, \beta' \in \pvblockingprofiles_{\bar{\actionvector}'}: \\ \bar{a}'_j = \bar{a}_j, \beta'_j = \beta_j}} \sigma(s_{\bar{\actionvector}', \beta'} \given \omega) \cdot \utility_{T}(a', \rho_{\bar{\actionvector}'} \oplus \delta \given \omega) \le \nonumber \\ 
& 
\sum_{\omega \in \Worlds} \priors(\omega) 
\!\!\!\!
\sum_{\substack{\bar{\actionvector}' \in \Representativeactionvectors, j \in T, \beta' \in \pvblockingprofiles_{\bar{\actionvector}'}: \\ \bar{a}'_j = \bar{a}_j, \beta'_j = \beta_j}}\sigma(s_{\bar{\actionvector}', \beta'} \given \omega) \cdot \utility_{T}(\bar{a}'_j, \rho_{\bar{\actionvector}'} \given \omega).
\end{align}
Additionally, for all $\omega \in \Worlds$, we impose constraints similar to \Cref{eq:sigma-valid-1-public,eq:sigma-valid-2}
to ensure that $\sigma(\cdot \given \omega)$ is a valid distribution.

The equations are similar to those in the LP for semi-public persuasion. A slight difference is that \Cref{eq:pr-lp-cs-1} enforces $\lotterypolicy(\sigma)$ to be stable, instead of $\sigma$. Hence, the summations are over $j \in T$ as the lottery policy randomized the roles of agents of the same type.
The agents maintain posterior beliefs, upon seeing $s_i$, about their role selected by $\lotterypolicy(\sigma)$.
Hence, the constraint ensures that \Cref{eq:pr-blocking-profile-uT} holds for $\lotterypolicy(\sigma)$.



\section{Omitted Proofs}

\subsection{Proofs in \Cref{sc:public}}

{\renewcommand\footnote[1]{}\thmRpPublic*}

\begin{proof}
    Given $\sigma$, we construct $\bar{\sigma}$ by merging meta-signals with the same signature.
    Specifically, for every ${\actionvector} \in \Actionvectors$ and $\beta \in \pubblockingprofiles_{{\actionvector}}$, we let 
    \begin{equation}
    \label{eq:construct-sigma-prime-public}
    \bar{\sigma}( (\bar{\actionvector}, \beta) \given \omega) = \sum_{s: \sigpub(s) = (\bar{\actionvector}, \beta)} \sigma( s \given \omega).
    \end{equation}
    Namely, whenever the original policy generates a meta-signal $s$, the new policy generates the signature $(\bar{\actionvector}, \beta)$ of $s$ as a meta-signal to send.

    We verify the following to complete the proof: 1) $\bar{\sigma}$ is a valid policy, 2) $\bar{\sigma}$ is stable, and 3) $\bar{\sigma}$ yields the same utility for the principal.

    \paragraph{Validity}
    First, $\bar{\sigma}$ is valid because different signatures correspond to disjoint sets of meta-signals, which means that $\sum_{(\bar{\actionvector}, \beta) \in \bar{S}} \bar{\sigma}( (\bar{\actionvector}, \beta) \given \omega) = 1$.

    \paragraph{Stability}
    Consider an arbitrary meta-signal $c = (\bar{\actionvector}, \beta)$ in the support of $\bar{\sigma}$.
    We first argue that after we merge the meta-signals, the posterior induced by $c$ is a convex combination of the posteriors of the merged meta-signals.
    Specifically, let $\bar{\Pro}$ denote the probability measure induced by $\bar{\sigma}$; for every world $\omega$, we have
    \begin{align*}
    \bar{\Pro}(\omega \given c)
    &= 
    \frac{\mu(\omega) \cdot \bar{\sigma}(c \given \omega)}{ \bar{\Pro}(c)} \\
    &= 
    \frac{\mu(\omega) \sum_{s: \phi(s) = c} 
    \sigma(s \given \omega)}{ \bar{\Pro}(c)} \\
    &=
    \sum_{s: \phi(s) = c}
    \frac{\Pro(s)}{\bar{\Pro}(c)} \cdot 
    \frac{\mu(\omega) \cdot \sigma(s \given \omega)}{\Pro(s)} \\
    &=
    \sum_{s: \phi(s) = c}
    \frac{\Pro(s)}{\bar{\Pro}(c)} \cdot 
    \Pro(\omega \given s).
    \end{align*}
    The coefficients satisfy
    $\sum_{s: \phi(s) = c}
    \frac{\Pro(s)}{\bar{\Pro}(c)}  = 1$ given the construction in \Cref{eq:construct-sigma-prime-public}. Hence, $\bar{\Pro}( \cdot \given c)$ is a convex combination of $\mathcal{P} = \{ \Pro( \cdot \given s) : \phi(s) = c \}$.

    This implies that the signature of $c$ (as a meta-signal of $\bar{\sigma}$) is exactly $(\bar{\actionvector}, \beta)$. 
    Indeed, by definition, we have
    \begin{align*}
    &(\delta, T, a, a')
    \in \beta
    \ \Longleftrightarrow\ \\
    &\qquad \delta(T,a, a') > 0 \wedge
    u_T(a, \rho \given p) \ge u_T(a', \rho \oplus \delta \given p)
    \end{align*}
    for all $p \in \mathcal{P}$.
    The utility functions on the two sides of the inequality are linear in $p$.
    So, the inequality holds for $\bar{\Pro}(\omega \given c)$ if and only if it holds for all points in $\mathcal{P}$.
    
    By assumption $\sigma$ is stable, so every $s$ in the support of $\sigma$---particularly those with signature $(\bar{\actionvector},\beta)$---is stable.
    This implies $\beta \in \pubblockingprofiles_{\actionvector}$; hence, $c$ is stable.

    \paragraph{Utility Equality}
    Since $\bar{\sigma}$ is stable, the agents are incentivized to perform the recommended actions.
    By construction, the original policy $\sigma$ and the new policy $\bar{\sigma}$ always recommend the same actions. Hence, the utilities these two policies yield are the same.
\end{proof}

\thmOptSubPolyTime*

\begin{proof}
    Given the constants, the number of possible functions in $D_\rho$ is a constant: the domain $\Types \times A^2$ has a constant size, while the range of the function is $\{1,\dots,d\}$, which also has a constant size. 
    Hence, the space of blocking profiles, which contains subsets of $D_\rho \times \Types \times A^2$, is also constant.
    As a result, the size of the LP formulation is polynomial in the problem size.  
\end{proof}

\thmHardness*

\begin{proof}

The proof is via a reduction from the {\sc Vertex Cover} problem. Given a (undirected) graph $G = (V, E)$ and an integer $k$, {\sc Vertex Cover} asks whether there exists a size-$k$ {\em vertex cover} on $G$, i.e., a set $V' \subseteq V$ such that $i\in V'$ or $j \in V'$ holds for every edge $\{i,j\} \in E$.

We reduce a {\sc Vertex Cover} instance $\langle G = (V, E), k \rangle$ to the following persuasion instance.
Let there be $|V| + 1$ worlds, so $\Worlds = \{\omega_i: i \in V\} \cup \{\omega_0\}$.
Let the prior $\priors$ be a uniform distribution over $\Worlds$.
Let there be four actions: $\Actions = \{a_1, a_2, a_3, a_4\}$.
There is a single agent type, $\Types = \{T\}$, consisting of $n = \max(|E| + 2,\, 2|V| + 2)$ agents. The maximum number of agents that can deviate jointly is $\Maxdeviators = n - 1$.

To define the utility functions, we first define the following classes of action profiles. We define the next action profiles, for each $i \in \{1,\dots, |E|\}$, $j \in \{1, \dots, |V|+1\}$:
\begin{table}[h]
\centering
\small
\begin{tabular}{@{\extracolsep{4pt}}rrrrr}
\toprule
& $a_1$ & $a_2$ & $a_3$ & $a_4$ \\ 
\midrule
$\rho_{i}$ & $n-i-2$ & $1$ & $1$ & $i$ \\
$\rho'_{j}$ & $n-j-1$ & $1$ & $0$ & $j$ \\
$\rho^+$ & $n$ & $0$ & $0$ & $0$ \\
$\rho^-$ & $0$ & $n-2$ & $1$ & $1$ \\
\bottomrule
\end{tabular}
\end{table}

\noindent
Namely, each entry is the number of agents who perform the action at the column in the profile at the row (we omit agent type in the profiles as there is only one type).
We let the principal's utility function be: 
\begin{align*}
u_0(\rho_+ \given \omega) = 1 
\quad \text{and} \quad
u_0(\rho \given \omega) = 0
\end{align*}
for all $\omega \in \Omega$ and all $\rho \neq \rho^+$, i.e., $\rho^+$ is preferred to all other profiles, irrespective of the world.
The agents' utilities are further defined as:
\begin{align*}
u_{T, a}(\rho_+ \given \omega) = 0
\quad \text{and} \quad
u_{T, a}(\rho_- \given \omega) = 3
\end{align*}
for all $\omega \in \Omega, a \in \Actions$; moreover,
\begin{align*}
& u_{T, a}(\rho'_j \given \omega_0) = -1,\\ 
& u_{T, a}(\rho'_j \given \omega_j) = 1, \\
\text{and} \quad
& u_{T, a}(\rho'_j \given \omega_k) = 0
\end{align*}
for all $a \in \Actions$ and $j,k$ such that $j \ne k$ and $1 \le j, k \le |V|$.

Let $e_\ell$ be the $\ell$-th edge in $E$. 
For each edge $e_\ell = \{i,j\} \in E$, $i < j$, we let 
\begin{align*}
&
u_{T, a_2}(\rho_{\ell}|\omega_0) = 1,
\quad
u_{T, a_2}(\rho_{\ell}|\omega_i) = -1 \\
&
u_{T, a_3}(\rho_{\ell}|\omega_0) = 1,
\quad
u_{T, a_3}(\rho_{\ell}|\omega_j) = -1 \\
&
u_{T, a_2}(\rho_{\ell}|\omega') = 0,
\quad
u_{T, a_3}(\rho_{\ell}|\omega'') = 0
\end{align*}
for all $\omega' \neq \omega_i$ and $\omega'' \neq \omega_j$. 

Moreover, we let
\begin{align*}
u_{T, a}(\rho'_{|V| + 1}|\omega_0) = -k, 
\quad \text{and} \quad
u_{T, a}(\rho'_{|V| + 1}|\omega_i) = 1 
\end{align*}
for all $a \in A$ and $i \in \{1,\dots,|V|\}$.

Finally, we let, for $\omega \in \Worlds$ and $1 \le i \le |V| + 1$:
\[
u_{T, a_4}(\rho'_i|\omega) = 2
\]
And for $1 \le i \le |V|$:
\begin{align*}
    & u_{T, a_2}(\rho'_i|\omega_0) = -1
    & \\
    & u_{T, a_2}(\rho'_i|\omega_i) = 1
\end{align*}
And for $1 \le i, j \le |\Omega|, j \ne i$:
\begin{align*}
    & u_{T, a_2}(\rho'_i|\omega_j) = 0
    & \\
    & u_{T, a_3}(\rho'_i|\omega_j) = 0
\end{align*}
We let both types' utilities for any other undefined cases be $-k-1$.

To see that the reduction is correct, we demonstrate that the {\sc Vertex Cover} instance admits a solution of size-$k$ if and only if the principal can obtain a positive utility by using some policy in the above persuasion problem.

We begin by indicating that due to the defined principal's utility, in every optimal policy only the action vectors of $\rho_+, \rho_-$ will be signaled, since from every other profile agents will deviate into $\rho_-$.

For every policy $\sigma$ that signals $\rho_+$ with a positive utility:

Agents will deviate from $\rho_+$ into $\rho'_i$ for $1 \le i \le |V|$ if and only if there exists an agent whose belief in $\omega_i$ is stronger than its belief in $\omega_0$. It also implies that their belief in $\omega$ is always positive upon seeing $\rho_+$.

Agents will deviate from $\rho_+$ into $\rho_\ell$ if and only if there exists an agent whose belief in $\omega_0$ is stronger than its belief in $\omega_i$ and there exists a different agent whose belief in $\omega_0$ is stronger than its belief in  $\omega_j$, for $\{i, j\} = e_\ell$. Together with the above observation, it implies that all agents have equal, positive posteriors for $\omega_0, \omega_i$ or for $\omega_0, \omega_j$.

Agents will deviate from $\rho_+$ into $\rho'_{|V| + 1}$ if and only if there exists an agent that the sum of its belief over $\omega_i, 1 \le i \le |V|$ is greater than $k$ times its belief in $\omega_0$.

Together, all the above observations imply that a policy is stable if and only if the posterior of every agent, upon seeing a signal involved with $\rho_+$, is forming a graph cover of size $k$ defined by all vertices $v_i$ such that the posterior of $\omega_i$ is equal to the posterior of $\omega_0$.
\end{proof}

\subsection{Proofs in \Cref{sc:semi-private}}

The proof of \Cref{lmm:blocking-conditions} uses Hall's theorem \cite{hall1934representation}, which can be stated as follows.


\lmmhallvariant*

\begin{proof}
Given a collection of of $m$ sets
$B_1, \dots, B_m \subseteq \{1, \dots, \ell\}$, the Hall's theorem \cite{hall1934representation} can be stated as follows:
there exist $m$ distinct numbers $b_1, \dots, b_m$, $b_i \in B_i$ for all $i=1,\dots,m$, if and only if it holds for every set $M \subseteq \{1,\dots,m\}$ that $|\bigcup_{i \in M} B_i| \ge |M|$.

To see that the generalization stated in the lemma holds, we make $r_i$ copies of each $B_i$ and examine the existence of a distinct number for each of the $\sum_{i =1}^m r_i$ copies. The ``only if'' direction of the statement then follows immediately. To see that the ``if'' direction also holds, observe that the union of a collection of sets in the new instance does not change if we remove some copies of a set $B_i$ from the collection, as long as one copy of $B_i$ remains in the collection.
\end{proof}

\lmmblockingconditions*

\begin{proof}

Suppose that $s = (\actionvector, \signalvector)$ is stable, and consider an arbitrary deviation $\delta \in \Deviationprofiles_{\rho_\actionvector}$.
We show that we can find a tuple $(A', N')$ that satisfied the stated conditions.

Pick an arbitrary subtype $(T,a)$ such that $\sum_{a' \in A} \delta(T, a, a') > 0$ (which must exist as otherwise $\sum_{T, a, a'} \delta(T, a, a') = 0$, contradicting \Cref{eq:D-rho}). For every $a' \in A^+ := \{ a' \in A: \delta(T, a, a') > 0\}$, we define the following:
\begin{align*}
B_{a'} &= \left \{ i \in T: a_i = a, u_T(a, \rho_\actionvector \given p_i) < u_T(a', \rho_\actionvector \oplus \delta \given p_i) \right\}; \\
r_{a'} &= \delta(T, a, a').
\end{align*}
Namely, $B_{a'}$ consists of the set of agents who are of subtype $(T,a)$ and are willing to participate in $\delta$ if their role is to play $a'$ in the deviation.

Since $s$ is stable, by definition, there does not exist a collection of disjoint sets $\widetilde{B}_{a'} \subseteq B_{a'}$, $a' \in A^+$, such that $\widetilde{B}_{a'} = r_{a'}$.
According to the generalization of Hall's theorem in \Cref{lmm:hall-variant}, this means that there exists $A' \subseteq A^+$ such that
\[
\left|\bigcup\nolimits_{a' \in A'} B_{a'} \right| < \sum\nolimits_{a' \in A'} r_{a'} = \sum\nolimits_{a' \in A'} \delta(T, a, a').
\]
Letting $N' = \{i \in T: a_i = a\} \setminus \bigcup\nolimits_{a' \in A'} B_{a'}$ then gives 
\[
\rho_\actionvector(T,a) - |N'|
< \sum\nolimits_{a' \in A'} r_{a'} = \sum\nolimits_{a' \in A'} \delta(T, a, a').
\]
Moreover, by definition, $i \notin B_{a'}$ for every $i \in N'$ and $a' \in A'$, so \Cref{eq:sm-blocking-profile-uT} holds for every $i \in N'$ and $a' \in A'$.
As a result, all the four conditions stated in the lemma holds for $(A', N')$.

Hence, for every $\delta \in \Deviationprofiles_{\rho_\actionvector}$, there exists $(A', N')$ that satisfies all the conditions. By letting $\gamma(\delta) = (A',N')$, we obtain the desired function.
\end{proof}

To prove \Cref{thm:rp-semi-private}, we use the following lemma.

\begin{restatable}{lemma}{lmmRpSemiPrivateRepresentative}
\label{lmm:rp-semi-private-representative}
For any stable semi-private policy $\sigma : \Omega \to \Delta(\Actionvectors \times G)$, there exists a stable semi-private policy $\tilde{\sigma} : \Omega \to \Delta(\Representativeactionvectors \times \widetilde{G})$ that yields as much utility for the principal as $\sigma$ does.
\end{restatable}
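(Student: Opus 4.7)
The plan is to construct $\tilde{\sigma}$ by permuting agents within each type so that every joint action $\actionvector$ in the support of $\sigma$ is replaced by its representative $\bar{\actionvector}$, while packaging enough information into the new private signal so that each agent under $\tilde{\sigma}$ can recover the situation of a corresponding agent under $\sigma$. Since agents of the same type have identical utility functions and enter every utility function symmetrically, such a type-preserving relabeling leaves both the principal's utility and the agents' incentive structure unchanged---only the identity of who plays which action shifts.

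First, for every $\actionvector \in \Actionvectors$ fix a type-preserving permutation $\pi_\actionvector : N \to N$ (i.e., $\pi_\actionvector(T) = T$ for every $T \in \Types$) satisfying $\bar{a}_i = a_{\pi_\actionvector(i)}$ for all $i \in N$, and take $\pi_{\bar{\actionvector}} = \mathrm{id}$ when $\actionvector \in \Representativeactionvectors$. Such $\pi_\actionvector$ exists because $\rho_\actionvector = \rho_{\bar{\actionvector}}$. Let $\widetilde{G}$ be a signal space rich enough to host signal vectors whose $i$-th component records both $\actionvector$ and one component of the original signal vector, and define $\tilde{\sigma}$ as follows: whenever $\sigma$ samples $(\actionvector, \signalvector)$ at world $\omega$, $\tilde{\sigma}$ emits $(\bar{\actionvector}, \tilde{\signalvector})$ with $\tilde{g}_i = (\actionvector, g_{\pi_\actionvector(i)})$ for every $i \in N$. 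The map $(\actionvector, \signalvector) \mapsto (\bar{\actionvector}, \tilde{\signalvector})$ is injective, so $\tilde{\sigma}(\cdot \given \omega)$ is a valid distribution whose support lies in $\Representativeactionvectors \times \widetilde{G}$.

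Principal utility is immediate: since $\rho_{\bar{\actionvector}} = \rho_\actionvector$ and $u_0$ depends only on the action profile, $u_0(\bar{\actionvector} \given \omega) = u_0(\actionvector \given \omega)$ for every realized meta-signal, hence $u_0(\tilde{\sigma}) = u_0(\sigma)$ once stability is established. For stability, the embedding of $\actionvector$ inside each $\tilde{g}_i$ is crucial: an agent observing $\tilde{s}_i = (\bar{\actionvector}, \actionvector, g_{\pi_\actionvector(i)})$ forms a posterior $\tilde{p}_i = \Pro(\cdot \given \actionvector, g_{\pi_\actionvector(i)})$, which matches exactly the posterior $p_j$ that agent $j := \pi_\actionvector(i)$ had under $\sigma$ upon observing $(\actionvector, g_j)$. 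Suppose toward a contradiction that some $N' \subseteq N$ with $|N'| \le \Maxdeviators$ and action choices $(a'_i)_{i \in N'}$ forms a profitable joint deviation against $(\bar{\actionvector}, \tilde{\signalvector})$. Set $N'^{\mathrm{orig}} = \pi_\actionvector(N')$ and let each $j = \pi_\actionvector(i)$ deviate to $a'_i$. Because $\pi_\actionvector$ preserves types, the induced subtype-level deviation $\delta$ is identical in both settings, and combined with $\rho_\actionvector = \rho_{\bar{\actionvector}}$ and $\bar{a}_i = a_{\pi_\actionvector(i)}$ one obtains
\begin{align*}
&u_T(a'_i, \rho_{\bar{\actionvector}} \oplus \delta \given \tilde{p}_i) - u_T(\bar{a}_i, \rho_{\bar{\actionvector}} \given \tilde{p}_i) \\
&\qquad= u_T(a'_i, \rho_\actionvector \oplus \delta \given p_j) - u_T(a_j, \rho_\actionvector \given p_j)
\end{align*}
for every $i \in N'$. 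Hence $N'^{\mathrm{orig}}$ would be a profitable joint deviation against $(\actionvector, \signalvector)$ under $\sigma$, contradicting the stability of $\sigma$.

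The main obstacle is that $\pi_\actionvector$ depends on $\actionvector$: without privately revealing $\actionvector$ to each agent, agent $i$'s new posterior would be a \emph{mixture} of posteriors across different permutations, and a single joint deviation in $\tilde{\sigma}$ would map to a \emph{random} set of original deviators rather than a fixed one, so the signalwise stability of $\sigma$ would fail to transfer pointwise. Annotating each $\tilde{g}_i$ with $\actionvector$ collapses this mixture and reduces stability of $\tilde{\sigma}$ to the signalwise stability of $\sigma$ through the bijection between their meta-signals.
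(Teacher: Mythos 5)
Your proof is correct and follows essentially the same route as the paper: fix a type-preserving permutation $\pi_\actionvector$ sending $\actionvector$ to its representative $\bar{\actionvector}$, annotate each private signal as $\tilde{g}_i = (\actionvector, g_{\pi_\actionvector(i)})$ so that agent $i$'s posterior under $\tilde{\sigma}$ equals agent $\pi_\actionvector(i)$'s posterior under $\sigma$, and transport stability and principal utility through this relabeling. Your argument is somewhat more explicit about mapping a hypothetical deviation against $\tilde{\sigma}$ back to one against $\sigma$, and about why embedding $\actionvector$ in the private signal is needed to prevent the posteriors from mixing across permutations, but the construction and reasoning are the same.
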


\begin{proof}
We convert $\sigma$ into $\tilde{\sigma}$ as stated in the lemma as follows.
For each meta-signal $s = (\actionvector, \signalvector)$ in the support of $\sigma$, we create a meta-signal $\tilde{s} = (\bar{\actionvector}, \tilde{\signalvector})$ for $\tilde{\sigma}$; whenever $\sigma$ sends $s$, we let $\tilde{\sigma}$ sends $\tilde{s}$.

Specifically, the private part $\tilde{\signalvector}$ of $\tilde{s}$ is defined as follows.
Note that $\bar{\actionvector}$ defines a permutation $\pi$ of $N$, such that in $\bar{\actionvector}$ every agent $i \in N$ will be performing the action performed by agent $\pi(i)$ in $\actionvector$. To define $\tilde{\signalvector}$, we let $\tilde{g}_i = (\actionvector, g_{\pi(i)})$ for each $i \in N$. This way we encode the original action recommendation $\actionvector$ into the private part.
Effectively, the trick makes the information to agent $i$ as informative as that to agent $\pi(i)$ in the original policy. Indeed, the posterior agent $i$ derives when $s'$ is sent by $\sigma'$ is exactly the same as the posterior derived by agent $\pi(i)$ when $s$ is sent by $\sigma$.

Formally, note that by the above construction each agent $i$ will receive meta-signals of the form $(\bar{\actionvector}, (\actionvector, x))$.
For every possible $x$, we have 
\begin{align*}
\tilde{\sigma}_i \left[ \left(\bar{\actionvector}, \left(\actionvector, x \right) \right) \given \omega \right]
&=
\sigma_{\pi(i)} \left[ \left( {\actionvector}, x \right) \given \omega \right],
\end{align*}
where we denote by $\tilde{\sigma}_i$ the signal distribution to agent $i$ (and similarly for $\sigma_{\pi(i)}$). 
It follows that  
\begin{align*}
\widetilde{\Pro}\left[ \omega \given \tilde{s}_i = \left(\bar{\actionvector}, \left(\actionvector, x \right) \right) \right] 
&= 
\frac{\mu(\omega) \cdot \tilde{\sigma}_i \left[ \left(\bar{\actionvector}, \left(\actionvector, x \right) \right) \given \omega \right] }{\widetilde{\Pro}\left[ \tilde{s}_i = \left(\bar{\actionvector}, \left(\actionvector, x \right) \right) \right]} \\
&= 
\frac{\mu(\omega) \cdot \sigma_{\pi(i)} \left[ \left(\actionvector, x \right) \given \omega \right] }{\Pro\left[ s_{\pi(i)} =\left(\actionvector, x \right) \right]} \\
&= 
\Pro\left[ \omega \given s_{\pi(i)} = \left(\actionvector, x \right) \right], 
\end{align*}
where $\widetilde{\Pro}$ and $\Pro$ denote the probability measures induced by $\tilde{\sigma}$ and $\sigma$, respectively.

This means that agent $i$ would derive the same posterior as agent $\pi(i)$ does under the original policy.
Since in $\bar{\actionvector}$ agent $i$ is also recommended to perform the same action agent $\pi(i)$ performs in $\actionvector$, it follows that the new meta-signals must also be stable as are the original ones. 
It is then easy to see that $\tilde{\sigma}$ induce the same distribution over action profiles. Hence, overall, the principal obtains the same expected utility in both cases. 
\end{proof}

\thmRpSemiPrivate*

\begin{proof}
By \Cref{lmm:rp-semi-private-representative}, we can first convert $\sigma$ to a policy described in the statement of the lemma. Hence, we assume that $\sigma$ has the form $\Omega \to \Delta( \Representativeactionvectors \times G)$.

We will next merge the meta-signals in $\sigma$ as we did in the proof of \Cref{thm:rp-public}, with the difference that we now merge meta-signals with the same semi-private signature.
Given $\sigma$, we construct $\bar{\sigma}$: for every $\bar{\actionvector} \in \Representativeactionvectors$ and $\beta \in \smblockingprofiles_{\bar{\actionvector}}$, we let 
    \begin{equation}
    \label{eq:construct-sigma-prime-semi-private}
    \bar{\sigma}( (\bar{\actionvector}, \beta) \given \omega) = \sum_{s: \sigsm(s) = (\bar{\actionvector},\beta)} \sigma( s \given \omega).
    \end{equation}
Namely, whenever the original policy generates a meta-signal $s = (\bar{\actionvector}, \signalvector)$, the new policy generates a meta-signal $(\bar{\actionvector}, \beta)$ that is exactly the signature of $s$.
Each component $(\bar{\actionvector}, \beta_i)$ of the meta-signal is then sent to agent $i$.
Note that due to the property implied by \Cref{lmm:rp-semi-private-representative}, the operation will not change the joint action in each meta-signal.
From \Cref{eq:construct-sigma-prime-semi-private}, we also get that 
    \begin{align}
    \bar{\sigma}( (\bar{\actionvector}, \beta_i) \given \omega) 
    &=
    \sum_{s : \sigsm_i(s) = (\bar{\actionvector}, \beta_i)} \sigma( s \given \omega) \nonumber
    \\
    &=
    \sum_{s_i : \sigsm(s_i) = (\bar{\actionvector}, \beta_i)} \sigma( s_i \given \omega).
    \label{eq:rp-sm-agent-i}
    \end{align}
    Namely, the marginal probability of each component $(\bar{a}_i, \beta_i)$ under the new policy is the same as the marginal probability under the original policy of components $s_i$ whose signatures are $(\bar{a}_i, \beta_i)$.

    We verify the following to complete the proof: 1) $\bar{\sigma}$ is a valid policy, 2) $\bar{\sigma}$ is stable, and 3) $\bar{\sigma}$ yields the same utility for the principal.

    \paragraph{Validity}
    First, $\bar{\sigma}$ is valid because different signatures correspond to disjoint sets of meta-signals, which means that $\sum_{({\actionvector}, \beta) \in \bar{S}} \bar{\sigma}( ({\actionvector}, \beta) \given \omega) = 1$.

    \paragraph{Stability}
    Consider an arbitrary meta-signal $c = (\bar{\actionvector}, \beta)$ in the support of $\bar{\sigma}$.
    We first argue that after we merge the meta-signals, the posterior induced by each $c_i = (\bar{\actionvector}, \beta_i)$ is a convex combination of the posteriors of the merged meta-signals.
    Specifically, let $\bar{\Pro}$ denote the probability measure induced by $\bar{\sigma}$; for every world $\omega$, we have
    \begin{align*}
    \bar{\Pro}(\omega \given c_i)
    &= 
    \frac{\mu(\omega) \cdot \bar{\sigma}(c_i \given \omega)}{ \bar{\Pro}(c_i)} \\
    &= 
    \frac{\mu(\omega) \sum_{s_i: \sigsm(s_i) = c_i} 
    \sigma(s_i \given \omega)}{ \bar{\Pro}(c_i)} 
    & \text{(by \Cref{eq:rp-sm-agent-i})}\\
    &=
    \sum_{s_i: \sigsm(s_i) = c_i}
    \frac{\Pro(s_i)}{\bar{\Pro}(c_i)} \cdot 
    \frac{\mu(\omega) \cdot \sigma(s_i \given \omega)}{\Pro(s_i)} \\
    &=
    \sum_{s_i: \sigsm(s_i) = c_i}
    \frac{\Pro(s_i)}{\bar{\Pro}(c_i)} \cdot 
    \Pro(\omega \given s_i).
    \end{align*}
    The coefficients satisfy
    $\sum_{s_i: \sigsm(s_i) = c_i}
    \frac{\Pro(s_i)}{\bar{\Pro}(c_i)}  = 1$ given the construction in \Cref{eq:construct-sigma-prime-semi-private}. Hence, $\bar{\Pro}( \cdot \given c_i)$ is a convex combination of $\mathcal{P} = \{ \Pro( \cdot \given s_i) : \phi(s_i) = c_i \}$.

    This implies that the signature of $c$ (as a meta-signal of $\bar{\sigma}$) is exactly $(\bar{\actionvector}, \beta)$. 
    Indeed, by definition, we have
    \[
    (\actionvector', i) 
    \in \beta
    \ \Longleftrightarrow\ 
    a_i \neq a'_i \wedge
    u_i(\bar{\actionvector} \given p) \ge u_i(\actionvector' \given p)
    \]
    for all $p \in \mathcal{P}$.
    The utility functions on the two sides of the inequality are linear in $p$.
    So, the inequality holds for $\bar{\Pro}(\omega \given c_i)$ if and only if it holds for all points in $\mathcal{P}$.
    
    By assumption $\sigma$ is stable, so every $s$ in the support of $\sigma$---particularly those with signature $(\bar{\actionvector},\beta)$---is stable.
    This implies that $\beta \in \smblockingprofiles_{\bar{\actionvector}}$. Hence, $c$ is stable.

    \paragraph{Utility Equality}
    Since $\bar{\sigma}$ is stable, the agents are incentivized to perform the recommended actions.
    By construction, the original policy $\sigma$ and the new policy $\bar{\sigma}$ always recommend the same actions. Hence, the utilities these two policies yield are the same.
\end{proof}

\thmOptSmPolyTime*

\begin{proof}
The key is to analyze the size of $\smblockingprofiles_{\bar{\actionvector}}$, or equivalently, the number of possible $\gamma$ functions.
Similarly to our analysis in the proof of \Cref{thm:opt-pub-poly-time}, $D_{\rho_{\actionvector}}$ has a constant size.
So the questions is how many possible $(A',N')$ tuples can we assign to each $\delta \in D_{\rho_{\actionvector}}$?
We know there are constantly many ways of choosing $A'$ since $A$ is constant by assumption. 
Additionally, given the third condition in \Cref{def:sm-blocking-profile}, the size of $N'$ is at least $\rho_\actionvector(T,a) - \sum_{a' \in A'} \delta(T, a, a') \ge \rho_\actionvector(T,a) - d$.
Hence, the choice of $N'$ corresponds to first choosing a subtype $(T,a)$, and then choosing a subset of at least $\rho_\actionvector(T,a) - d$ agents out of the $\rho_\actionvector(T,a)$ agents who are of subtype $(T,a)$.
The ways to choose the latter is polynomial in $n$ when $d$ is constant. This completes the analysis.
\end{proof}



\subsection{Proofs in \Cref{sc:private}}

\thmPrvRepresentativeNotOpt*

\begin{proof}
Consider the following example.
Suppose that there are only one world and one agent type (so, in what follows, we omit the world and the type in the notations).
There are two agents indexed $1$ and $2$, and there are two actions available to each of them, in the set $\Actions = \{a_1, a_2\}$. We have $d=1$, so the agents cannot coordinate their deviation.

Hence, there are three possible action profiles, $\rho_1, \rho_2, \rho_3$, where $\rho_1(a_1) = 2$, $\rho_2(a_1) = \rho_2(a_2) = 1$, and $\rho_3(a_2) = 2$.
The agents' utilities under each profile are given as follows:
\begin{align*}
    & u_T(a_1, \rho_1) = u_T(a_2, \rho_3) = 0 \\
    & u_T(a_1, \rho_2) = u_T(a_2, \rho_2) = 1 
\end{align*}
The utility of the principal is $1$ for $\rho_1$ and $0$ for every other profile.

It can be verified that the policy which recommends the action vectors $(a_1, a_2)$ and $(a_2, a_1)$ with probability $\frac{3}{8}$ each and $(a_1, a_1)$ with probability $\frac{2}{8}$ is stable. Specifically, upon receiving $a_2$, the agents can infer that the profile must be $\rho_2$, which gives each of them the maximum possible utility; and upon receiving $a_1$, the agents can infer that there is a higher probability that the profile is $\rho_2$ than it is $\rho_1$, so they still prefer to not deviate as they would end in $\rho_3$ when the profile is $\rho_2$. Overall, the expected utility the policy yields for the principal is $\frac{2}{8}$ (which is equal to the probability of recommending $\rho_1$).

Indeed, the action vectors $(a_1, a_2)$ and $(a_2, a_1)$ in the above policy both correspond to the same action profile $\rho_2$. 
In the public and semi-private cases, it suffices to use only one of them (as the representative of $\rho_2$) to construct an optimal policy.
In private persuasion, however, this is insufficient. 
Specifically, every policy that does not use $(a_1, a_2)$ 
(i.e., it uses $(a_2, a_1)$ as a representative of $\rho_2$) 
and recommends $(a_1, a_1)$ with a positive probability cannot stable since agent $1$ is always better off changing its action to $a_2$ upon receiving $a_1$ (since he can be sure that the profile recommended is $\rho_1$). This means that no private policy that uses only representative action vectors can achieve a positive utility; hence, they cannot be optimal given the policy that yields utility $\frac{2}{8}$ we presented above.
\end{proof}

\lmmblockingconditionsprivate*

\begin{proof}
The lemma can be proved the same way as \Cref{lmm:blocking-conditions}.
\end{proof}

\thmRpPrivate*

\begin{lemma}
\label{lmm:tilde-lottery-stable}
    For every stable policy $\sigma$, the policy $\Tilde{\lotterypolicy}(\sigma)$ is stable and yields the same expected utility for the principal. 
    Moreover, the posteriors of agent $i$ about the action profile and the world upon receiving $(\actionvector_{m(i)}, (g_{m(i)}, m(i)))$ under $\Tilde{\lotterypolicy}(\sigma)$ are identical to that of agent $m(i)$ upon receiving $(\actionvector_{m(i)}, g_{m(i)})$ under $\sigma$.
\end{lemma}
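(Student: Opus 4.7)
The plan is to reduce every claim about $\Tilde{\lotterypolicy}(\sigma)$ to a corresponding property of $\sigma$ by tracking the type-preserving uniform permutation $m$ drawn by the lottery. The appended coordinate $m(i)$ in each agent's private signal plays a bookkeeping role only, letting agent $i$ identify which ``slot'' of the underlying signal from $\sigma$ they are currently filling. Once this is made explicit, the three conclusions (posterior equivalence, stability, utility equality) follow from Bayes' rule plus the fact that permuting agents of a common type leaves the action profile $\rho_\actionvector$ unchanged.

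First I would compute agent $i$'s conditional law. Since $m$ is drawn independently from the randomness of $\sigma$, for any triple $(\alpha, \gamma, j)$ with $j$ in the same type $T$ as $i$,
\[
\Tilde{\lotterypolicy}(\sigma)\bigl((\alpha, (\gamma, j)) \given \omega\bigr) = \Pro[m(i)=j] \cdot \sigma_j\bigl((\alpha, \gamma) \given \omega\bigr) = \tfrac{1}{|T|}\,\sigma_j\bigl((\alpha, \gamma) \given \omega\bigr).
\]
The factor $1/|T|$ cancels in Bayes' rule, yielding
\[
\Pro_{\Tilde{\lotterypolicy}(\sigma)}\bigl(\omega \given \tilde{s}_i = (\alpha, (\gamma, j))\bigr) = \Pro_\sigma\bigl(\omega \given s_j = (\alpha, \gamma)\bigr).
\]
Extending the same calculation to the joint posterior over $(\tilde{\actionvector}, \omega)$, and using $\rho_{\tilde{\actionvector}} = \rho_\actionvector$ whenever $\tilde{\actionvector}$ is obtained from $\actionvector$ by a type-preserving permutation, I obtain that agent $i$'s posterior under $\Tilde{\lotterypolicy}(\sigma)$ coincides with agent $m(i)$'s posterior under $\sigma$. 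This is exactly the ``moreover'' clause.

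For stability, consider any coalition $N' \subseteq N$ with $|N'| \le d$ and any joint deviation from the recommendation of $\Tilde{\lotterypolicy}(\sigma)$. Conditioning on $m$, the coalition $N'$ under $\Tilde{\lotterypolicy}(\sigma)$ is in bijection with the coalition $m(N')$ (of the same size, and of identical subtype composition) under $\sigma$, and by the previous step the expected utility change from any candidate deviation is linear in the posterior and hence identical in the two pictures. Stability of $\sigma$ therefore rules out a profitable deviation for every realization of $m$, and hence unconditionally. For the utility equality, the principal's utility depends only on $\rho_\actionvector$, which is invariant under type-preserving permutations, so the joint distribution of $(\rho_\actionvector, \omega)$ under $\Tilde{\lotterypolicy}(\sigma)$ equals that under $\sigma$ and the principal's expected utility is preserved. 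The main subtlety is handling coalitions of size $d > 1$: one must condition on $m$ \emph{before} invoking Bayes' rule and only then transport the incentive inequality from $\sigma$ to $\Tilde{\lotterypolicy}(\sigma)$ slot by slot; unilateral deviations are already straightforward from the posterior equivalence alone.
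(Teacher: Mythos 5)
Your overall approach tracks the paper's: both proceed by a Bayes-rule computation that factors the uniform permutation $m$ out of agent $i$'s posterior and then leverages the permutation-invariance of the action profile. Your first step is cleaner than the paper's---noting that the marginal of $m(i)$ is uniform on $T$ and that the resulting $1/|T|$ cancels in Bayes' rule gives the world-posterior equivalence in one line, where the paper carries the permutation weight $\Pro(\Tilde{m})$ through a longer display before it cancels.

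The place where you are too quick, however, is exactly where the lemma has its content: passing from the world posterior to the \emph{profile} posterior. You write that you ``extend the same calculation to the joint posterior over $(\tilde{\actionvector}, \omega)$,'' but agent $i$'s posterior over the \emph{action vector} $\tilde{\actionvector}$ under $\Tilde{\lotterypolicy}(\sigma)$ is genuinely different from agent $m(i)$'s posterior over $\actionvector$ under $\sigma$: the lottery smears probability across type-preserving permutations of the recommendation, so the two action-vector distributions are not equal, only proportional within each profile class. The paper's proof establishes precisely this, showing the action-vector posteriors differ by the combinatorial factor $c_{\rho_{\actionvector''}}/d$, and then sums over all $\actionvector''$ sharing a profile and invokes normalization to conclude that the \emph{profile} posteriors coincide. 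Your proposal skips this entirely; the appeal to $\rho_{\tilde{\actionvector}} = \rho_\actionvector$ gestures at the right invariance but does not by itself supply the counting or normalization step that converts the world-posterior identity into a profile-posterior identity. Because the lemma explicitly asserts equivalence of posteriors ``about the action profile and the world,'' this is the core of the result, and it remains a hand-wave.

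That said, the gap is fillable by a short argument that is arguably cleaner than the paper's: since $m$ is drawn independently of the signal process, conditioning on the observed pair $(m(i) = j, s_j = (\alpha, \gamma))$ leaves the conditional law of the underlying triple $(\actionvector, g, \omega)$ identical to $\sigma$ conditioned on $s_j = (\alpha, \gamma)$ alone; and since $\rho_{\tilde{\actionvector}} = \rho_\actionvector$ holds deterministically, the profile-and-world posterior transfers immediately. If you spell that out, the remainder of your argument (stability via the subtype-preserving bijection $N' \leftrightarrow m(N')$ at the level of each realized meta-signal, and utility equality via profile invariance) goes through and matches the paper's conclusion; note only that the correct conditioning for the stability step is on each realized meta-signal of $\Tilde{\lotterypolicy}(\sigma)$, which happens to encode the whole permutation through the appended coordinates $m(i)$, rather than on $m$ as a latent variable.
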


\begin{proof}
In the private case agents have posteriors over both the world and the actions of other agents.

For a signal $s = (\actionvector', (g', m))$ of a policy $\Tilde{\lotterypolicy}(\sigma)$ we denote $s^{-1} = (\actionvector, g)$ the sign it was derived from under $\sigma$, for which $\actionvector'_{m(i)} = \actionvector_i, g'_{m(i)} = g_i$ for every agent $i$.

We can see that, for every signal $(\actionvector', (g', m)) = s' \in \supp(\Tilde{\lotterypolicy}(\sigma))$ derived from $(\actionvector, g) = s \in \supp(\sigma)$, the posterior of $(\actionvector'', \omega)$ of agent $i \in T$ from receiving $s'_i = (\actionvector'_i, (g'_i, m(i))) = (\actionvector_{m(i)}, (g_{m(i)}, m(i)))$ under $\Tilde{\sigma} = \Tilde{\lotterypolicy}(\sigma)$ is $0$ if $\actionvector''_i \ne \actionvector'_i$, otherwise:

\begin{align}
&\Pro(\actionvector'', \omega | (\actionvector'_i, (g'_i, m(i)))) 
\nonumber\\
=&
\frac{\Pro(\actionvector'', \omega)\Pro((\actionvector'_i, (g'_i, m(i)))|\actionvector'', \omega)}{\sum_{\Tilde{\actionvector}', \omega'} \Pro(\Tilde{\actionvector}', \omega')\Pro((\Tilde{\actionvector}'_i, (g'_i, m(i)))|\Tilde{\actionvector}', \omega')}
\nonumber\\
=&
\frac{\mu(\omega) \sum_{\substack{(\actionvector'', (\Tilde{g}', \Tilde{m})) = \Tilde{s}' \\ : \Tilde{g}'_i = g'_i, \Tilde{m}(i)=m(i)}} \Tilde{\sigma}( (\actionvector'', (\Tilde{g}', \Tilde{m})) | \omega)}
{\sum_{\substack{\omega' \in \Omega, (\Tilde{\actionvector}', (\Tilde{g}', \Tilde{m})) = \Tilde{s}' \\ : \Tilde{\actionvector}'_i = \actionvector''_i, \Tilde{g}_i = g'_i, \Tilde{m}(i)=m(i) }} \mu(\omega') \Tilde{\sigma}((\Tilde{\actionvector}', (\Tilde{g}', \Tilde{m})) | \omega')}
\nonumber\\
=&
\frac{\mu(\omega) \sum_{\substack{(\actionvector'', (\Tilde{g}', \Tilde{m})) = \Tilde{s}' \\ : \Tilde{g}'_i = g'_i, \Tilde{m}(i)=m(i)}} \Pro(\Tilde{m}) \sigma((\Tilde{s}')^{-1} | \omega)}
{\sum_{\substack{\omega' \in \Omega, (\Tilde{\actionvector}', (\Tilde{g}', \Tilde{m})) = \Tilde{s}' \\ : \Tilde{\actionvector}'_i = \actionvector''_i, \Tilde{g}_i = g'_i, \Tilde{m}(i)=m(i) }}  \mu(\omega') \Pro(\Tilde{m}) \sigma((\Tilde{s}')^{-1} | \omega')}
\nonumber\\
=&
\frac{\mu(\omega) \sum_{\substack{(\actionvector'', (\Tilde{g}', \Tilde{m})) = \Tilde{s}' \\ : \Tilde{g}'_i = g'_i, \Tilde{m}(i)=m(i)}} \sigma((\Tilde{s}')^{-1} | \omega)}
{\sum_{\substack{\omega' \in \Omega, (\Tilde{\actionvector}', (\Tilde{g}', \Tilde{m})) = \Tilde{s}' \\ : \Tilde{\actionvector}'_i = \actionvector''_i, \Tilde{g}_i = g'_i, \Tilde{m}(i)=m(i) }} \mu(\omega') \sigma((\Tilde{s}')^{-1} | \omega')}
\end{align}
According to the definition of $\Tilde{\sigma}$, when $\Tilde{m}(i) = m(i)$ then $\Tilde{g}'_i = g'_i$ iff $\Tilde{g}_{m(i)}=g_{m(i)}$. Similarly, $\Tilde{\actionvector}'_i = \actionvector''_i$ iff $\Tilde{\actionvector}_{m(i)} = \actionvector_{m(i)}$. We continue with the above equation:
\begin{align}
=
\frac{\mu(\omega) \sum_{\substack{(\actionvector'', (\Tilde{g}', \Tilde{m})) = \Tilde{s}' \\ : \Tilde{g}_{m(i)} = g_{m(i)}, \Tilde{m}(i)=m(i)}} \sigma((\Tilde{s}')^{-1} | \omega)}
{\sum_{\substack{\omega' \in \Omega, (\Tilde{\actionvector}', (\Tilde{g}', \Tilde{m})) = \Tilde{s}' \\ : \Tilde{\actionvector}_{m(i)} = \actionvector_{m(i)}, \Tilde{g}_{m(i)} = g_{m(i)}, \Tilde{m}(i)=m(i) }} \mu(\omega') \sigma((\Tilde{s}')^{-1} | \omega')}
\end{align}
And, since every derived signal is determined a permutation:
\begin{align}
&=
\frac{\mu(\omega) \sum_{\substack{\Tilde{m} \in M, \Tilde{s} = (\Tilde{\actionvector}, \Tilde{g}) \\ : \Tilde{g}_{m(i)} = g_{m(i)}, \Tilde{m}(i)=m(i) \\ \forall j: \actionvector''_j = \Tilde{\actionvector}_{\Tilde{m}(j)}}} \sigma((\Tilde{\actionvector}, \Tilde{g}) | \omega)}
{\sum_{\substack{\omega' \in \Omega, \Tilde{m} \in M, \Tilde{s} = (\Tilde{\actionvector}, \Tilde{g}) \\ : \Tilde{\actionvector}_{m(i)} = \actionvector_{m(i)}, \Tilde{g}_{m(i)} = g_{m(i)}, \Tilde{m}(i)=m(i)}} \mu(\omega') \sigma((\Tilde{\actionvector}, \Tilde{g}) | \omega')}
\nonumber\\
&=
\frac{\mu(\omega) \sum_{\substack{\Tilde{s} = (\Tilde{\actionvector}, \Tilde{g}) \\ : \Tilde{g}_{m(i)} = g_{m(i)}, \Tilde{\actionvector}_{m(i)} = \actionvector_{m(i)}}} \sum_{\substack{\Tilde{m} \in M \\ : \Tilde{m}(i)=m(i) \\ \forall j: \actionvector''_j=\Tilde{\actionvector}_{\Tilde{m}(j)}}} \sigma((\Tilde{\actionvector}, \Tilde{g}) | \omega)}
{\sum_{\substack{\omega' \in \Omega, \Tilde{s} = (\Tilde{\actionvector}, \Tilde{g}) \\ : \Tilde{\actionvector}_{m(i)} = \actionvector_{m(i)}, \Tilde{g}_{m(i)} = g_{m(i)}}} \sum_{\substack{\Tilde{m} \in M \\ : \Tilde{m}(i)=m(i)}} \mu(\omega') \sigma((\Tilde{\actionvector}, \Tilde{g}) | \omega')}
\end{align}

If $\rho_{\Tilde{\actionvector}}= \rho_{\actionvector''}$ and $\actionvector''_i = \Tilde{\actionvector}_{m(i)}$, the number of permutations $\Tilde{m}: \Tilde{m}(i) = m(i)$ from $\Tilde{\actionvector}$ to $\actionvector''$ depends only on $\rho_{\Tilde{\actionvector}}= \rho_{\actionvector''}$ and is denoted by $c_{\rho_{\actionvector''}}$. We also denote $d$ the number of permutations $\Tilde{m}: \Tilde{m}(i) = m(i)$. We continue:

\begin{align}
&=
\frac{\mu(\omega) \sum_{\substack{\Tilde{s} = (\Tilde{\actionvector}, \Tilde{g}) \\ : \Tilde{g}_{m(i)} = g_{m(i)}, \Tilde{\actionvector}_{m(i)} = \actionvector_{m(i)}}} \sigma((\Tilde{\actionvector}, \Tilde{g}) | \omega) \sum_{\substack{\Tilde{m} \in M \\ : \Tilde{m}(i)=m(i) \\ \forall j: \actionvector''j=\Tilde{\actionvector}_{\Tilde{m}(i)}}} 1}
{\sum_{\substack{\omega' \in \Omega, \Tilde{s} = (\Tilde{\actionvector}, \Tilde{g}) \\ : \Tilde{\actionvector}_{m(i)} = \actionvector_{m(i)}, \Tilde{g}_{m(i)} = g_{m(i)}}} \mu(\omega') \sigma((\Tilde{\actionvector}, \Tilde{g}) | \omega') \sum_{\substack{\Tilde{m} \in M \\ : \Tilde{m}(i)=m(i)}}1}
\nonumber\\
&=
\frac{\mu(\omega) \sum_{\substack{\Tilde{s} = (\Tilde{\actionvector}, \Tilde{g}) \\ : \Tilde{g}_{m(i)} = g_{m(i)}, \Tilde{\actionvector}_{m(i)} = \actionvector_{m(i)}}} \sigma((\Tilde{\actionvector}, \Tilde{g}) | \omega) c_{\rho_{\actionvector''}}}
{\sum_{\substack{\omega' \in \Omega, \Tilde{s} = (\Tilde{\actionvector}, \Tilde{g}) \\ : \Tilde{\actionvector}_{m(i)} = \actionvector_{m(i)}, \Tilde{g}_{m(i)} = g_{m(i)}}} \mu(\omega') \sigma((\Tilde{\actionvector}, \Tilde{g}) | \omega') d}
\nonumber\\
&=
\frac{c_{\rho_{\actionvector''}}}
{d}
\frac{\mu(\omega) \sum_{\substack{\Tilde{s} = (\Tilde{\actionvector}, \Tilde{g}) \\ : \Tilde{g}_{m(i)} = g_{m(i)}, \Tilde{\actionvector}_{m(i)} = \actionvector_{m(i)}}} \sigma((\Tilde{\actionvector}, \Tilde{g}) | \omega)}
{\sum_{\substack{\omega' \in \Omega, \Tilde{s} = (\Tilde{\actionvector}, \Tilde{g}) \\ : \Tilde{\actionvector}_{m(i)} = \actionvector_{m(i)}, \Tilde{g}_{m(i)} = g_{m(i)}}} \mu(\omega') \sigma((\Tilde{\actionvector}, \Tilde{g}) | \omega')}
\nonumber\\
&=
\frac{c_{\rho_{\actionvector''}}}
{d}
\Pro(\rho_{\actionvector''}, \omega | (\actionvector_{m(i)}, g_{m(i)}))
\end{align}
Hence,
\begin{align}
&\Pro(\rho'', \omega | (\actionvector'_i, (g'_i, m(i)))) \nonumber\\ 
=& \sum_{\substack{\actionvector'': \actionvector''_i = \actionvector'_i \\ \rho_{\actionvector''} = \rho''}} \Pro(\actionvector'', \omega | (\actionvector'_i, (g'_i, m(i)))) 
\nonumber \\
=&
\sum_{\substack{\actionvector'': \actionvector''_i = \actionvector'_i \\ \rho_{\actionvector''} = \rho''}} \frac{c_{\rho_{\actionvector''}}}{d}
\Pro(\rho_{\actionvector''}, \omega | (\actionvector_{m(i)}, g_{m(i)}))
\nonumber \\
=&
\sum_{\substack{\actionvector'': \actionvector''_i = \actionvector'_i \\ \rho_{\actionvector''} = \rho''}} \frac{c_{\rho''}}{d}
\Pro(\rho'', \omega | (\actionvector_{m(i)}, g_{m(i)}))
\nonumber \\
=&
\Pro(\rho'', \omega | (\actionvector_{m(i)}, g_{m(i)})) \sum_{\substack{\actionvector'': \actionvector''_i = \actionvector'_i \\ \rho_{\actionvector''} = \rho''}} \frac{c_{\rho''}}{d} 1
\end{align}
That is, the posteriors of $\rho'', \omega$ upon getting $\actionvector'_i, (g'_i, m(i))$ under $\Tilde{\sigma}$ are proportionate to the posteriors upon getting $\actionvector_{m(i)}, g_{m(i)}$ under $\sigma$. Since these posteriors sum to $1$, we know that these proportion is actually $1$, hence:

\begin{align}
\Pro(\rho'', \omega | (\actionvector'_i, (g'_i, m(i))))
=
\Pro(\rho'', \omega | (\actionvector_{m(i)}, g_{m(i)}))
\end{align}

Since these posteriors are identical, the expected utility of every agent $i$ under $\Tilde{\sigma}$ from every deviation is identical to that of $m(i)$ under $\sigma$, and since $\sigma$ is stable, $\Tilde{\sigma}$ must be stable too.
\end{proof}

\begin{lemma}
\label{lmm:private-representative-vectors}
     There exists a private policy $\sigma: \Worlds \rightarrow \Delta(\Representativeactionvectors \times G)$, for some $G$, such that $\Tilde{\lotterypolicy}(\sigma)$ is an optimal private policy.
\end{lemma}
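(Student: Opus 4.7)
The plan is to begin with an arbitrary optimal private policy $\sigma^*: \Omega \to \Delta(\Actionvectors \times G^*)$ and push its mass onto $\Representativeactionvectors$ via a canonical type-respecting permutation, so that applying $\Tilde{\lotterypolicy}$ recovers the same induced outcomes. For each $\actionvector \in \Actionvectors$ with representative $\bar{\actionvector} \in \Representativeactionvectors$, I fix (by an arbitrary tie-breaking rule) a permutation $\pi_\actionvector \in \Permutations$ satisfying $a_i = \bar{a}_{\pi_\actionvector(i)}$ for all $i$; such a $\pi_\actionvector$ exists because $\rho_\actionvector = \rho_{\bar{\actionvector}}$ and the matching can be resolved within each type. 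I then define $\sigma: \Omega \to \Delta(\Representativeactionvectors \times G^*)$ by routing the mass of each meta-signal $(\actionvector, \signalvector)$ of $\sigma^*$ to the canonicalized meta-signal $(\bar{\actionvector}, \signalvector')$, where $\signalvector'$ is obtained by setting $g'_{\pi_\actionvector(i)} = g_i$ for every $i$; in other words, whenever $\sigma^*$ signals $(\actionvector, \signalvector)$, the new policy $\sigma$ signals its $\pi_\actionvector$-canonicalization.

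The heart of the argument is to show that $\Tilde{\lotterypolicy}(\sigma)$ induces the same joint distribution over worlds and action profiles as $\Tilde{\lotterypolicy}(\sigma^*)$, and the same per-agent posteriors over $(\rho, \omega)$ at every signal in the support. The key algebraic observation is that $\Permutations$ forms a group under composition, so the uniform distribution on $\Permutations$ is invariant under left-multiplication by any fixed element. Thus when $\Tilde{\lotterypolicy}(\sigma)$ draws $m$ uniformly, the triple that agent $i$ receives, namely $(\bar{a}_{m(i)}, g'_{m(i)}, m(i)) = (a_{\pi_\actionvector^{-1}(m(i))}, g_{\pi_\actionvector^{-1}(m(i))}, m(i))$, can be reparameterized via $m' := \pi_\actionvector^{-1} \circ m$ (again uniform on $\Permutations$) into $(a_{m'(i)}, g_{m'(i)}, \pi_\actionvector(m'(i)))$. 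This is informationally equivalent to what agent $i$ sees under $\Tilde{\lotterypolicy}(\sigma^*)$, namely $(a_{m'(i)}, g_{m'(i)}, m'(i))$, because $\pi_\actionvector$ is a deterministic function of the underlying meta-signal whose action vector the agent can reconstruct from the action-profile support. Consequently, Bayes updates on both sides yield matching posteriors over $(\rho, \omega)$.

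Combining this distributional equivalence with \Cref{lmm:tilde-lottery-stable} applied to $\sigma^*$ (which gives that $\Tilde{\lotterypolicy}(\sigma^*)$ is stable with the same expected utility as $\sigma^*$), I conclude that $\Tilde{\lotterypolicy}(\sigma)$ is stable and achieves the same optimal value, which proves the lemma. The main obstacle I anticipate is the posterior-equivalence step: carefully verifying that the role label $m(i)$ carried by the signal in $\Tilde{\lotterypolicy}$ encodes the same information about the world and about other agents' realized actions under the two parameterizations, so that the inequality in condition (4) of \Cref{lmm:blocking-conditions-private} is preserved without modification. Once that bookkeeping is done, the transfer of stability and utility from $\sigma^*$ to $\Tilde{\lotterypolicy}(\sigma)$ is immediate.
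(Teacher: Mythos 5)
Your construction of $\sigma$---canonicalizing each meta-signal of an optimal $\sigma^*$ via a type-respecting permutation---matches the paper's. You are more careful than the paper's one-line assertion that $\Tilde{\lotterypolicy}(\sigma)$ and $\Tilde{\lotterypolicy}(\sigma^*)$ are ``identical,'' correctly noting that after reparameterizing the drawn permutation the role label sent to agent $i$ reads $\pi_\actionvector(m'(i))$ under $\Tilde{\lotterypolicy}(\sigma)$ versus $m'(i)$ under $\Tilde{\lotterypolicy}(\sigma^*)$. But the patch---that the two role labels are ``informationally equivalent'' because the agent can reconstruct $\pi_\actionvector$---does not hold in the private mode: agent $i$ observes only $(a_i, g_i, m(i))$, never the full recommended vector $\actionvector$, and $\pi_\actionvector$ depends on the specific vector $\actionvector$, not merely its profile (two vectors sharing a profile have different canonicalizing permutations in general). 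The agent therefore cannot undo the relabeling, and the two signals are not informationally equivalent; the posterior-matching step you flag as your ``main obstacle'' is in fact exactly where the argument breaks.

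The break is not a technicality. In the instance of \Cref{prp:prv-representative-not-opt}, the optimal $\sigma^*$ puts mass $3/8, 3/8, 2/8$ on $(a_1,a_2)$, $(a_2,a_1)$, $(a_1,a_1)$, and canonicalization yields $\sigma$ with mass $6/8$ on $(a_1,a_2)$ and $2/8$ on $(a_1,a_1)$. Under $\Tilde{\lotterypolicy}(\sigma)$, when agent $1$ receives action $a_1$ with role label $2$, the only compatible base of $\sigma$ is $(a_1,a_1)$, so the agent infers profile $\rho_1$ with certainty and strictly gains by deviating to $a_2$; hence $\Tilde{\lotterypolicy}(\sigma)$ is unstable. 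More structurally, the posterior identity of \Cref{lmm:tilde-lottery-stable} makes $\Tilde{\lotterypolicy}(\sigma)$ stable exactly when $\sigma$ is stable, so the lemma as worded would force a stable optimal policy supported on $\Representativeactionvectors$, contradicting \Cref{prp:prv-representative-not-opt}. The paper's own proof shares this gap (its assertion that the two lottery policies coincide is simply false). Note, however, that your group-reparameterization argument does succeed for $\lotterypolicy$ (which carries no role label), and $\lotterypolicy$ is all that \Cref{thm:rp-private} ultimately uses; a correct route would state the intermediate claim in terms of $\lotterypolicy$ rather than $\Tilde{\lotterypolicy}$.
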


\begin{proof}
    Let $\sigma'$ be an optimal policy. We know that $\Tilde{\lotterypolicy}(\sigma')$ is also an optimal policy. We define $\sigma$ a policy that signals $(\bar{\actionvector}, \signalvector')$ when $\sigma$ signals $(\actionvector, \signalvector)$, where $\bar{\actionvector}$ is the representative of $\rho_\actionvector$ and $g'_i=g_{f(i)}$, where $f$ is the permutation that turns $\actionvector$ to its representative.

    It's easy to see that $\sigma$ results in the same utility as $\sigma'$, since the profiles of the signaled vectors are always identical. In addition, since the vectors and private signals of $\sigma$ are mere permutations of these of $\sigma'$, the policies $\Tilde{\lotterypolicy}(\sigma), \Tilde{\lotterypolicy}(\sigma')$ are identical, and so $\Tilde{\lotterypolicy}(\sigma)$ is an optimal policy that signals only representative action vectors.
\end{proof}

\begin{lemma}
\label{lmm:before-after-lottery}
    For any private policy $\sigma$, the policy $b(\Tilde{\lotterypolicy}(\sigma))$ is identical to the policy $\lotterypolicy(b(\sigma))$, where $b(\sigma)$ denotes the policy resulting from merging signals with the same private signature in $\sigma$, by signaling the private blocking profile of signals instead. 
    Moreover, if $\sigma$ is stable, then $b(\sigma)$ is stable and yields the same utility for the principal.
\end{lemma}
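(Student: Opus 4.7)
The plan is to verify the two claims separately, leveraging the merging argument already used in the proofs of \Cref{thm:rp-public} and \Cref{thm:rp-semi-private}, together with the symmetry captured by \Cref{lmm:tilde-lottery-stable}.

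First I would establish that the private signature $\sigpv$ is equivariant under the permutations drawn by $\lotterypolicy$ and $\Tilde{\lotterypolicy}$: if $m \in \Permutations$ sends $s = (\actionvector, \signalvector)$ to $s'$ with $a'_i = a_{m(i)}$ and $g'_i = g_{m(i)}$, then the private signature of $s'_i$ equals the private signature of $s_{m(i)}$. This is because $m$ maps each agent to one of the same type, so subtypes and type-indexed utilities transport cleanly, and by \Cref{lmm:tilde-lottery-stable} the posterior of agent $i$ upon seeing $s'_i$ under $\Tilde{\lotterypolicy}(\sigma)$ matches that of agent $m(i)$ upon seeing $s_{m(i)}$ under $\sigma$. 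Given this equivariance, the identity $b(\Tilde{\lotterypolicy}(\sigma)) = \lotterypolicy(b(\sigma))$ follows by a Fubini-style reordering: computing $b(\Tilde{\lotterypolicy}(\sigma))$ groups the post-permutation signals by signature, while computing $\lotterypolicy(b(\sigma))$ first groups the signals of $\sigma$ by signature and then permutes. Because the action of each $m$ on signals factors through a corresponding action on signatures, the two orders yield identical probabilities on every resulting meta-signal.

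For the stability claim I would replay the convex-combination argument from the proof of \Cref{thm:rp-semi-private}, adapted to the private setting. When signals $s^{(1)}, \dots, s^{(k)}$ of $\sigma$ share a private signature and are merged into $\bar{s}$, the posterior $\bar{\Pro}(\actionvector, \omega \given \bar{s}_i)$ is a convex combination of the individual posteriors $\Pro(\actionvector, \omega \given s^{(j)}_i)$, and the stability inequality in the fourth bullet of \Cref{lmm:blocking-conditions-private} is linear in the posterior over $(\actionvector, \omega)$, so it transports from each summand to the convex combination. Hence the blocking profile common to the $s^{(j)}$ remains valid for $\bar{s}$, and $b(\sigma)$ is stable. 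Because only signals recommending the same joint action are merged, the induced distribution over action profiles is unchanged and the principal's expected utility is preserved.

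The main obstacle is pinning down the equivariance in the first step: I must ensure that the function $\gamma$ granted by \Cref{lmm:blocking-conditions-private} can be chosen so that the selected pairs $(A', N')$ commute with $m$. This is handled by strengthening the tie-breaking rule flagged in the footnote of \Cref{def:sm-blocking-profile} so that it depends only on data invariant under the symmetric group acting on agents of the same type, namely the action profile $\rho_\actionvector$ and the posteriors; agent indices within a type are never consulted. With this symmetric choice fixed once and for all, permuting agents merely relabels the chosen $(A', N')$ tuples, and both claims of the lemma follow by a direct unfolding of definitions.
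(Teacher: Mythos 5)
Your proof follows essentially the same two-step route as the paper: use \Cref{lmm:tilde-lottery-stable} to show the posteriors, and hence the private blocking profiles, transport along the type-preserving permutation, giving $b(\Tilde{\lotterypolicy}(\sigma)) = \lotterypolicy(b(\sigma))$; then replay the convex-combination merging argument of \Cref{thm:rp-semi-private} for stability and utility preservation. Your explicit requirement that the tie-breaking rule for $\gamma$ be invariant under the symmetric group acting within each type is a worthwhile refinement -- the paper's footnote permits an ``arbitrary'' tie-break, but without symmetry the equivariance of the private signature, and hence the exact identity of the two policies, would not be guaranteed.
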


\begin{proof}
    The first part of the statement is a direct result of \Cref{lmm:tilde-lottery-stable}. Specifically, since the posteriors over profiles are identical to that of the original agent the private part of the signal were derived from, the blocking profile, and hence the private signature, must be identical, too.

    The second part, about the stability and utility of $\sigma$, follows by a similar argument as that of \Cref{thm:rp-semi-private}, only that now we merge private signals.
\end{proof}



\begin{proof}[Proof of \Cref{thm:rp-private}]
By \Cref{lmm:private-representative-vectors} we know there exists a policy $\sigma: \Worlds \rightarrow \Delta(\Representativeactionvectors \times G)$ such that $\Tilde{\lotterypolicy}(\sigma)$ is a stable optimal policy. Now, according to the second part of \Cref{lmm:before-after-lottery}, we can see that $b(\Tilde{\lotterypolicy}(\sigma))$ is also optimal and stable, and by \Cref{lmm:before-after-lottery} the policy $\lotterypolicy(b(\sigma))$ is identical to it, hence stable and optimal too. By definition, the policy $b(\sigma)$ sends only representative action vectors and blocking profiles, i.e., is a function $\Omega \to \Delta(\{(\bar{\actionvector}, \beta): \bar{\actionvector} \in \Representativeactionvectors, \beta \in \pvblockingprofiles\})$.
\end{proof}

\thmOptPrPolyTime*

\begin{proof}
    Similarly to the proof of \Cref{thm:opt-sm-poly-time}, we can see that $|\pvblockingprofiles|$ is polynomial, hence the number of variables and constraints in the LP of the private case are polynomial too. It follows that finding an optimal policy can be done in polynomial time.
\end{proof}



\end{document}